\newcommand*{\addFileDependency}[1]{
  \typeout{(#1)}
  \@addtofilelist{#1}
  \IfFileExists{#1}{}{\typeout{No file #1.}}
}
\newtheorem{lemma}{Proposition}
\newcommand{\D}{\mathcal{D}}
\numberwithin{equation}{section}
\theoremstyle{plain}
\begin{document}

\begin{frontmatter}
\title{Variational Inference for Bayesian Neural Networks under Model and Parameter Uncertainty}
\runtitle{VI for BNN under Model and Parameter Uncertainty}

\begin{aug}
\author{\fnms{Aliaksandr} \snm{Hubin}\thanksref{addr1}
\ead[label=e1]{aliaksah@math.uio.no}},
\author{\fnms{Geir} \snm{Storvik}\thanksref{addr2}
\ead[label=e2]{geirs@math.uio.no}}

\runauthor{Hubin and Storvik}

\address[addr1]{Department of Mathematics, University of Oslo, NMBU, NR, HiOF.
    \printead{e1} 
}

\address[addr2]{Department of Mathematics, University of Oslo, NR. 
    \printead{e2}
}

\end{aug}
\begin{abstract}
Bayesian neural networks (BNNs) have recently regained a significant amount of attention in the deep learning community due to the development of scalable approximate Bayesian inference techniques. There are several advantages of using a Bayesian approach: Parameter and prediction uncertainties become easily available, facilitating rigorous statistical analysis. Furthermore, prior knowledge can be incorporated. However, so far, there have been no scalable techniques capable of combining both structural and parameter uncertainty. In this paper, we apply the concept of model uncertainty as a framework for structural learning in BNNs and hence make inference in the joint space of structures/models and parameters. 
Moreover, we suggest an adaptation of a scalable variational inference approach with reparametrization of marginal inclusion probabilities to incorporate the model space constraints. Experimental results on a range of benchmark datasets show that we obtain comparable accuracy results with the competing models, but based on methods that are much more sparse than ordinary BNNs. 
\end{abstract}

\begin{keyword}
Bayesian neural networks; Structural learning; Model selection; Model averaging; Approximate Bayesian inference; Predictive uncertainty
\end{keyword}

\end{frontmatter}


\section{Introduction}

In recent years, frequentist deep learning procedures have become extremely popular and highly successful in a wide variety of real-world applications ranging from natural language to image analyses \citep{Goodfellow-et-al-2016}. These algorithms iteratively apply some nonlinear transformations aiming at optimal prediction of response variables from the outer layer features. This yields high flexibility in modelling complex conditional distributions of the responses. Each transformation yields another hidden layer of features which are also called neurons. The architecture/structure of a deep neural network includes the specification of the nonlinear intra-layer transformations (\emph{activation functions}), the number of layers (\emph{depth}), the number of features at each  layer (\emph{width}) and the connections between the neurons (\emph{weights}). In the standard (frequentist) settings, the resulting model is trained using some optimization procedure (e.g. stochastic gradient descent) with respect to its parameters in order to fit a particular objective (like minimization of the root mean squared error or negative log-likelihood).
Very often deep learning procedures outperform traditional statistical models, even when the latter are carefully designed and reflect expert knowledge~\citep{refenes1994stock, razi2005comparative, adya1998ective, sargent2001comparison, kanter2015deep}. However, typically one has to use huge datasets to be able to produce generalizable neural networks and avoid overfitting issues. Even though several regularization techniques ($L_1$ and $L_2$ penalties on the weights, dropout, batch normalization, etc.) have been developed for deep learning procedures to avoid overfitting to training datasets, the success of such approaches is not obvious.  Unstuctured pruning the network, either by putting some weights to zero or by removing some nodes has been shown to be possible.

As an alternative to frequentist deep learning approaches, Bayesian neural networks represent a very flexible class of models, which are quite robust to overfitting \citep{neklyudov2018variance}. However, they often remain heavily over-parametrized. There are several implicit approaches for the sparsification of BNNs by shrinkage of weights through priors~\citep{JMLR:v15:jylanki14a, blundell2015weight, molchanov2017variational,ghosh2018structured, neklyudov2017structured}. For example, \citet{blundell2015weight} suggest a mixture of two Gaussian densities and then perform a fully factorizable mean-field variational approximation. \citet{ghosh2019model, louizos2017bayesian} independently generalize this approach by means of suggesting Horseshoe priors \citep{carvalho2009handling} for the weights, providing even stronger shrinkage and automatic specification of the mixture component variances required in~\citet{blundell2015weight}. 
Some algorithmic procedures can also be seen to correspond to specific Bayesian priors, e.g. \citet{molchanov2017variational} 
show that Gaussian dropout corresponds to BNNs with log uniform priors on the weight parameters.

In this paper, we consider a formal Bayesian approach for jointly taking into account \textit{structural}  \textit{uncertainty} and \textit{parameter uncertainty} in BNNs as a generalization of the methods developed for Bayesian model selection within linear regression models. The approach is based on introducing latent binary variables corresponding to the inclusion-exclusion of particular weights within a given architecture. This is done by means of introducing spike-and-slab priors. 
Such priors for the BNN setting were suggested in~\citet{polson2018posterior} and~\citet{hubin2018thesis}. A computational procedure for inference in such settings was proposed in an early version of this paper~\citep{hubin2019combining} and was further used without any changes in~\citet{bai2020efficient}. An asymptotic theoretical result for the choice of prior inclusion probabilities is additionally presented in~\citet{bai2020efficient}, however, it only is valid when the number of parameters goes to infinity and when the prior variance of the slab components is fixed. Here, we go further, in that we introduce hyperpriors on the prior inclusion probabilities and variance of the slab components making them stochastic, we also allow for more flexible variational approximations based on the multivariate Gaussian structures for inclusion indicators. Additionally, we consider several alternative prediction procedures for fully Bayesian model averaging, including posterior mean-based models, and the median probability model, and perform a comprehensive experimental study comparing the suggested approach with several competing algorithms and several data sets.

Using a Bayesian formalization in the space of models allows adapting the whole machinery of Bayesian inference in the joint model-parameter settings, including \textit{Bayesian model averaging} (BMA) (across all models) or \textit{Bayesian model selection} (BMS) of one \textit{“best”} model with respect to some model selection criterion \citep{claeskens2008model}. In this paper, we study BMA as well as the \textit{median probability} model \citep{barbieri2004optimal, barbieri2018median} and \textit{posterior mean} model-based inference for BNNs. Sparsifying properties of BMS (in particular the median probability model) are also addressed within the experiments. Finally, following \citet{hubin2018thesis} we will link the obtained \textit{marginal inclusion probabilities} to \textit{binary dropout} rates, which gives proper probabilistic reasoning for the latter. The inference algorithm is based on scalable stochastic variational inference. 

The suggested approach has similarities to binary dropout that has become very popular~\citep{srivastava2014dropout}. However, while standard binary dropout can be seen as a Bayesian approximation to a Gaussian process model where only parameter estimation is taken into account~\citep{Gal2016Uncertainty}, our approach explicitly \emph{models} structural uncertainty. In this sense, it is closely related to Concrete dropout \citep{gal2017concrete}. However, the model proposed by \citet{gal2017concrete} does not allow for BMS: The median probability model will either select all weights or nothing due to a strong assumption of having the same dropout probabilities for the whole layer. Furthermore, the variational approximation procedure applied in \citet{gal2017concrete} has not been studied in the model uncertainty context. 

At the same time,  it is important to state explicitly that our approach does not currently aim at interpretable inference on Bayesian neural networks in most of the cases with an exception of having a special case of model selection in GLM models. Also, interpretable models could in principle be feasible if direct connections from all the layers to the responses are allowed. However, such cases are not addressed in this paper.

The rest of the paper is organized as follows: Section~\ref{sec:back} gives some background and discuss related work. The class of BNNs and the corresponding model space are mathematically defined in Section~\ref{section2}. In Section~\ref{section3}, we describe the algorithm for training the suggested class of models using the reparametrization of marginal inclusion probabilities. Section~\ref{OtherInf} discusses several predictive inference possibilities.  In Section~\ref{section4}, the suggested approach is applied to the two classical benchmark datasets MNIST, FMNIST (for image classifications) as well as  PHONEME (for sound classification). We also compare the results with some of the existing approaches for inference on BNNs. Finally, in Section~\ref{section5} some conclusions and suggestions for further research are given. Additional results are provided in the supplementary materials to the paper.


\section{Background and related work}\label{sec:back}

Bayesian neural networks (BNNs) were already introduced a few decades ago by~\citet{neal1992bayesian,mackay1995bayesian,bishop1997bayesian}. 
BNNs take advantage of the rigorous Bayesian approach and are able to properly handle parameter and prediction uncertainty and can in principle also incorporate prior knowledge. In many cases, this leads to more robust solutions with less overfitting. However, this comes at a price of extremely high computational costs. Until recently, inference on BNNs could not scale to large and high-dimensional data due to the limitations of standard MCMC approaches, the main numerical procedure in use. Several attempts based on subsampling techniques for MCMC, which are either approximate \citep{bardenet2014towards, bardenet2017markov, korattikara2014austerity, quiroz2014speeding, welling2001belief} or exact \citep {quiroz2016exact, maclaurin2014firefly, liu2015exact,welling2001belief} have been proposed, but none of them is able to explore the parameter spaces efficiently in ultrahigh-dimensional settings. 

An alternative to the MCMC technique is to perform approximate Bayesian inference through variational Bayes, also known as variational inference \citep{jordan1999introduction}. Due to the fast convergence properties of the variational methods, variational inference algorithms are typically orders of magnitude faster than MCMC algorithms  in high-dimensional problems~\citep{ahmed2012scalable}. The variational inference has various applications in latent variable models, such as mixture models \citep{humphreys2000approximate}, hidden Markov models \citep{mackay1997ensemble} and graphical models \citep{attias2000variational} in general. \citet{graves2011practical} suggested the methodology for scalable variational inference to Bayesian neural networks. This methodology was further improved by incorporating various variance reduction techniques, which are discussed in \citet{Gal2016Uncertainty}.


As mentioned in the introduction, it has been shown that the majority of the weight parameters in neural netwoeks can be pruned out from the model without a significant loss of predictive accuracy. However, pruning is typically done implicitly by deleting the weights via ad-hoc thresholding. 
Yet, learning which parameters to include in a model, can also be framed as a structure learning or a model selection problem.

As discussed in \citet{claeskens_hjort_2008}, \citet{steel2020model} or \citet{hansen2001model} (among other venues),  model selection and model averaging in statistics generally assumes a discrete and countable (finite or infinite) set of models living on a corresponding model space. Models within a model space can differ in terms of the likelihood used, the link functions addressed, or which parameters are included in a linear or non-linear predictor. The purpose of model selection is to choose a single (best in some sense) model from a model space. This choice can lead to more interpretable models in some use cases  \citep[like variable selection in linear regression,][]{kuo1998variable} or simply best models for some purpose (like prediction) in others \citep{geisser1979predictive}, sometimes both coincide \citep{hubin2022flexible}, but that is not always the case \citep{breiman2001statistical}. Sparsity in model selection may or may not be of interest dependent on the context, although parsimony in some sense is typically desired. At the same time, the model selection often leads to problems with uncertainty handling resulting in too narrow confidence/credible intervals of the parameters  \citep{heinze2018variable} and thus often may result in similar problems for predictions, i.e. overfitting. Model averaging (if model uncertainty is properly addressed) can resolve these issues \citep{bornkamp2017model} and has other advantages \citep{steel2020model}. Also, model uncertainty aware model selection, e.g. using the median probability model is more robust to overfitting \citep{ghosh2015bayesian}.  

There have been numerous works showing the efficiency and accuracy of model selection/averaging related to parameter selection through introducing latent variables corresponding to different discrete model configurations. In the Bayesian context, the posterior distribution can then be used to both select the best sparse configuration and address the joint model-and-parameters-uncertainty explicitly \citep{george1993variable,Clyde:Ghosh:Littman:2010,Frommlet2012,hubin2018mode,hubin2020novel,hubin2022flexible}. Spike-and slab priors~\citep{mitchell1988bayesian} are typically used in this setting.  All of these approaches have demonstrated both good predictive performance of the obtained sparse models and the ability to recover meaningful complex nonlinearities. They are however based on adaptations of Markov chain Monte Carlo (MCMC) and do not scale well to large high-dimensional data samples. 
\citet{louizos2017bayesian} also warn about the complexity of explicit discretization of model configuration within BNNs, as it causes an exponential explosion with respect to the total number of parameters, and hence infeasibility of inference for high-dimensional problems. \citet{polson2018posterior} study the use of the spike-and-slab approach in BNNs from a theoretical standpoint.


\citet{logsdon2010variational, carbonetto2012scalable} suggest a fully-factorized variational distribution capable of efficiently and precisely "linearizing" the computational burden of Bayesian model selection in the context of \emph{linear} models with an ultrahigh number of potential covariates, typical for genome-wide association studies (GWAS). In the discussion of his Ph.D. thesis, \citet{hubin2018thesis} proposed combining the approaches of \citet{logsdon2010variational, carbonetto2012scalable} and  \citet{graves2011practical}  for scalable approximate Bayesian inference on the joint space of models and parameters in deep Bayesian regression models. We develop this idea further in this article. 

\section{The model}\label{section2}
A neural network model links (possibly multidimensional) observations $\boldsymbol y_i\in \mathcal{R}^r$ and explanatory variables $\boldsymbol x_i\in \mathcal{R}^p$   via a probabilistic functional mapping with a mean parameter vector $\boldsymbol \mu_i = \boldsymbol \mu_i(\boldsymbol x_i) \in \mathcal{R}^r$: 
\begin{align}
\boldsymbol y_i \sim \mathfrak{f}\left(\boldsymbol \mu_i(\boldsymbol x_i), \phi \right),\quad i \in \{1,...,n\}, \label{eq:ann} 
\end{align}
where $\mathfrak{f}$ is some observation distribution, typically from the exponential family, while $\phi$ is a dispersion parameter. To construct the vector of mean parameters $\boldsymbol \mu_i$, one builds a sequence of building blocks of hidden layers through semi-affine  transformations:
\begin{align}
z^{(l+1)}_{ij} =&g^{(l)}_j\left(\beta^{(l)}_{0j} + \sum_{k=1}^{p^{(l)}}\beta^{(l)}_{kj}z^{(l)}_{ik}\right), l=1,...,L-1, j=1,...,p^{(l+1)},\label{eq:neuron}
\end{align}
with $\mu_{ij}=z_{ij}^{(L)}$. Here, $L$ is the number of layers, $p^{(l)}$ is the number of nodes within the corresponding layer while $g^{(l)}_{j}$ is a univariate function (further referred to as the \textit{activation function}).
Further, $\beta^{(l)}_{kj}\in\mathcal{R}, k > 0$ are the weights (slope coefficients) for the inputs $z^{(l)}_{ik}$ of the $l$-th layer (note that $z^{(1)}_{ik} = x_{ik}$ and $p^{(1)}=p$). For $k=0$, we obtain the intercept/bias terms. Finally, we introduce latent binary indicators  $\gamma^{(l)}_{kj}\in \{0,1\}$  switching the corresponding weights on and off such that $\beta_{kj}^{(l)}=0$ if $\gamma^{(l)}_{kj}=0$.  


In our notation, we explicitly differentiate between discrete structural/model configurations defined by the vectors 
$\boldsymbol\gamma =\{\gamma_{kj}^{(l)},j=1,..,p^{(l+1)},k=0,...,p^{(l)},l=1,...,L-1\}$ 
(further referred to as models) constituting the model space $\Gamma$ and parameters of the models, conditional on these configurations $\boldsymbol \theta|\boldsymbol\gamma = \{\boldsymbol\beta,\phi|\boldsymbol\gamma\}$, where only those $\beta_{kj}^{(l)}$ for which $\gamma_{kj}^{(l)}=1$ are included. 
This approach is (in statistical science literature) a rather standard  way to explicitly specify the model uncertainty in a given class of models and is used in e.g. \citet{Clyde:Ghosh:Littman:2010,Frommlet2012,hubin2022flexible}. 

A Bayesian approach is completed by specification of model priors $p(\boldsymbol\gamma)$ and parameter priors for each model $p(\boldsymbol\beta|\boldsymbol\gamma,\phi)$. If the dispersion parameter is present in the distribution of the outcomes, one also has to define a prior
$p(\phi|\boldsymbol\gamma)$. Many kinds of priors on $p(\boldsymbol\beta|\boldsymbol\gamma,\phi)$ can be considered, including the mixture of Gaussians prior \citep{blundell2015weight}, the Horseshoe prior \citep{ghosh2019model,louizos2017bayesian}, or mixtures of g-priors \citep{li2018mixtures}, which could give further penalties to the weight parameters. 
We first following our early preprint \citep{hubin2019combining} as well as even earlier ideas from \citet{hubin2018thesis,polson2018posterior} and consider the independent Gaussian spike-and-slab weight priors  combined with independent Bernoulli priors for the latent inclusion indicators being equal to 1.  This choice of the priors corresponds to marginal spike-and-slab priors for the weights~\citep{Clyde:Ghosh:Littman:2010}:
\begin{subequations}\label{eq:prior.beta}
\begin{alignat}{4}
p(\beta^{(l)}_{kj}|\sigma_{\beta,l}^2,\gamma^{(l)}_{kj}) =& \gamma^{(l)}_{kj}\mathcal{N}(0,\sigma_{\beta, l}^2) + (1-\gamma^{(l)}_{kj})\delta_0(\beta^{(l)}_{kj}),\\ 
p(\gamma^{(l)}_{kj})=&\text{Bernoulli}(\psi^{(l)}),
\end{alignat}
\end{subequations}
Here, $\delta_0(\cdot)$ is the delta mass or "spike" at zero, $\sigma_{\beta,l}^2$ is the prior variance of $\beta^{(l)}_{kj}$, whilst $\psi^{(l)}\in(0,1)$ is the prior probability for including the weight $\beta^{(l)}_{kj}$ into the model. 

To automatically infer the prior variance and the prior probability for including a weight, we assume for $\sigma_{\beta,l}^2$ a standard  inverse Gamma hyperprior with hyperparameters $a_\beta^{(l)},b_\beta^{(l)}$, and for $\psi^{(l)}$ a $\text{Beta}(a^{(l)}_\psi,b^{(l)}_\psi)$ prior:
\begin{subequations}\label{eq:prior.beta}
\begin{alignat}{4}
p(\beta^{(l)}_{kj}|\sigma_{\beta,l}^2,\gamma^{(l)}_{kj}) =& \gamma^{(l)}_{kj}\mathcal{N}(0,\sigma_{\beta, l}^2) + (1-\gamma^{(l)}_{kj})\delta_0(\beta^{(l)}_{kj}),\\ 
p(\sigma^{2}_{\beta, l}) =& \text{Inv-Gamma}(a^{(l)}_\beta,b^{(l)}_\beta),\\
p(\gamma^{(l)}_{kj})=&\text{Bernoulli}(\psi^{(l)}),\\
p(\psi^{(l)}) =& \text{Beta}(a^{(l)}_\psi,b^{(l)}_\psi).
\end{alignat}
\end{subequations}
Here, $\delta_0(\cdot)$ is the delta mass or "spike" at zero, $\sigma_{\beta,l}^2$ is the prior variance of $\beta^{(l)}_{kj}$ with a standard inverse Gamma hyperprior with hyperparameters $a_\beta^{(l)},b_\beta^{(l)}$, whilst $\psi^{(l)}\in[0,1]$ is the prior probability for including the weight $\beta^{(l)}_{kj}$ into the model. Further, $\psi^{(l)}$ is assumed $\text{Beta}(a^{(l)}_\psi,b^{(l)}_\psi)$ distributed. With the presence of a dispersion parameter, an additional prior is needed, see \citet{dey2000generalized}. We will refer to our model as the Latent Binary Bayesian Neural Network (LBBNN) model. 


\section{Bayesian inference}\label{section3}

The main goal of inference with uncertainty in both models and parameters is to infer the posterior marginal distribution of some parameter of interest $\Delta$ 
(for example the distribution of a new observation $y^*$ conditional on new covariates $\boldsymbol{x}^*$) based on data $\D$:
\begin{align}\label{eq:pred.delta}
p(\Delta|\D) = \sum_{\boldsymbol\gamma \in \Gamma}\int_{\boldsymbol\theta\in\Theta_\gamma}p(\Delta|\boldsymbol\theta,\boldsymbol\gamma,\D)p(\boldsymbol\theta,\boldsymbol\gamma|\D)\text{d}\boldsymbol\theta,
\end{align}
where $\Theta_\gamma$ is the parameter space defined through $\bm\gamma$.
Standard procedures for dealing with complex posteriors is to apply Monte Carlo methods which involve simulations from $p(\boldsymbol\theta,\boldsymbol\gamma|\D)$. For the  model defined by~\eqref{eq:ann}-\eqref{eq:prior.beta} with many hidden variables, such simulations become problematic.

The idea behind variational inference~\citep{graves2011practical,blei2017variational}  is to apply the approximation
\begin{align}\label{eq:pred.delta2}
\tilde p(\Delta|\D) = \sum_{\boldsymbol\gamma \in \Gamma}\int_{\boldsymbol\theta\in\Theta_\gamma}p(\Delta|\boldsymbol\theta,\boldsymbol\gamma,\D)q_{\bm\eta}(\boldsymbol\theta,\boldsymbol\gamma)\text{d}\boldsymbol\theta.
\end{align}
for some suitable (parametric) distribution
$q_{{\boldsymbol\eta}}(\boldsymbol\theta,\boldsymbol\gamma)$ which, with appropriate choices of the parameters $\boldsymbol\eta$, approximates the posterior well and is \emph{simple} to sample from.
The specification of $\bm\eta$ is typically obtained through the minimization of
the Kullback-Leibler divergence from the variational family distribution to the posterior distribution:
\begin{align}
\text{KL}(q_{\boldsymbol\eta}(\boldsymbol\theta,\boldsymbol\gamma)&||p(\boldsymbol\theta,\boldsymbol\gamma|\D))
 = \sum_{\boldsymbol\gamma \in \Gamma}\int_{\Theta_\gamma}q_{\boldsymbol\eta}(\boldsymbol\theta,\boldsymbol\gamma)\log \tfrac{q_{\boldsymbol\eta}(\boldsymbol\theta,\boldsymbol\gamma)}{p(\boldsymbol\theta,\boldsymbol\gamma|\D)}\text{d}\boldsymbol\theta,\label{eq:KL}
\end{align}
 with respect to the variational parameters $\boldsymbol\eta$. Compared to standard variational inference approaches, the setting is extended to include the discrete model identifiers $\bm\gamma$. For an optimal choice $\hat{\bm\eta}$ of $\bm\eta$, inference on $\Delta$ is performed through Monte Carlo estimation of~\eqref{eq:pred.delta2} inserting 
 $\hat{\bm\eta}$ for $\bm\eta$. The main challenge then becomes choosing a suitable variational family  and a computational procedure for minimizing~\eqref{eq:KL}. Note that although this minimization is still a computational challenge, it will typically be much easier than directly obtaining samples from the true posterior. The final Monte Carlo estimation will be simple, provided the variational distribution  $q_{\bm\eta}(\bm\theta,\bm\gamma)$ is selected such that it is simple to sample from.

As in standard settings of variational inference, minimization of the divergence~\eqref{eq:KL} is equivalent to  maximization of the evidence lower bound (ELBO)
\begin{align}
\mathcal{L}_{VI}(\boldsymbol\eta)=& \sum_{\boldsymbol\gamma \in \Gamma}\int_{\Theta_\gamma}q_{\boldsymbol\eta}(\boldsymbol\theta,\boldsymbol\gamma)\log p(\D|\boldsymbol\theta,\boldsymbol\gamma) \text{d}\boldsymbol\theta-\text{KL}\left(q_{\boldsymbol\eta}(\boldsymbol\theta,\boldsymbol\gamma)||p(\boldsymbol\theta,\boldsymbol\gamma)\right) \label{ELBO}
\intertext{through the equality}
\mathcal{L}_{VI}(\boldsymbol\eta)=&p(\D) -\text{KL}\left(q_{\boldsymbol\eta}(\boldsymbol\theta,\boldsymbol\gamma)||p(\boldsymbol\theta,\boldsymbol\gamma|\D)\right),\nonumber
\end{align}
which also shows that $\mathcal{L}_{VI}(\boldsymbol\eta)$ is a lower bound of the marginal likelihood $p(\D)$.

\subsection{Variational distributions}

We will consider a variational family previously proposed for linear regression~\citep{logsdon2010variational, carbonetto2012scalable} which we extend to the LBBNN  setting. 
Assume
\begin{align}
q_{{\boldsymbol\eta}}(\boldsymbol\theta,\boldsymbol\gamma) = q_{\boldsymbol\eta_0}(\phi) \prod_{l=1}^{L-1}\prod_{j=1}^{p^{(l+1)}}\prod_{k=0}^{p^{(l)}}q_{\kappa_{kj},\tau_{kj}}(\beta^{(l)}_{kj}|\gamma^{(l)}_{kj})q_{{\alpha^{(l)}_{kj}}}(\gamma^{(l)}_{kj}),\label{eq:varfactms}
\end{align}
where $q_{\boldsymbol\eta_0}(\phi)$ is some appropriate distribution for the dispersion parameter, 
\begin{align}
q_{\kappa^{(l)}_{kj},\tau^{(l)}_{kj}}\left(\beta^{(l)}_{kj}|\gamma^{(l)}_{kj}\right) =&\gamma^{(l)}_{kj}\mathcal{N}(\kappa^{(l)}_{kj},{\tau^2}^{(l)}_{kj}) + (1- \gamma^{(l)}_{kj})\delta_0(\beta^{(l)}_{kj}),\label{par_approx}
\intertext{and}
q_{{\alpha^{(l)}_{kj}}}(\gamma^{(l)}_{kj}) =&
\text{Bernoulli}(\alpha^{(l)}_{kj}).\label{stuct_approx}
\end{align}
With probability, $\alpha^{(l)}_{kj}\in [0,1]$, the posterior of parameters of weight $\beta^{(l)}_{kj}$ will be approximated by a normal distribution with some mean and variance ("slab"), and otherwise, the weight is put to zero.
Thus, $\alpha^{(l)}_{kj}$ will approximate the marginal posterior inclusion probability of the weight $\beta^{(l)}_{kj}$.
Here, $\boldsymbol\eta= \{{\boldsymbol\eta_0,(\kappa^{(l)}_{kj},\tau^2}^{(l)}_{kj},\alpha^{(l)}_{kj}),l=1,...,L-1,k=1,...,p^{(l+1)},j=1,...,p^{(l)}\}$. 

A similar variational distribution has also been considered within BNN through the dropout approach~\citep{srivastava2014dropout}. For dropout, however, the final network is dense but trained through a  Monte Carlo average of sparse networks.
In our approach, the target distribution is different in the sense of including the binary variables $\{\gamma^{(l)}_{kj}\}$ as part of the \emph{model}. Hence, our marginal inclusion probabilities can serve as a particular case of {dropout} rates with a \textit{proper} probabilistic interpretation in terms of structural model uncertainty.

The variational distribution~\eqref{eq:varfactms}-\eqref{stuct_approx}, corresponding to the commonly applied mean-field approximation, can be seen as a rather crude approximation, which completely ignores all posterior dependence between the model structures or parameters. Consequently, the resulting conclusions can be misleading or inaccurate as the posterior probability of one weight might be highly affected by the inclusion of others. Such a dependence structure can be built into the variational approximation either through the $\gamma$'s or through the $\beta$'s (or both). Here, we only consider dependence structures in the inclusion variables. We still assume independence between layers, but within layers, we introduce a dependence structure by defining
$\bm\alpha^l=\{\alpha_{kj}^{(l)}\}$ now to be a stochastic vector, which on logit-scale follows a multivariate normal distribution:
\begin{align}
\text{logit}(\boldsymbol\alpha^{(l)}) \sim \text{MVN}(\boldsymbol\xi^{(l)},\boldsymbol\Sigma^{(l)}).\label{stuct_depend}
\end{align}
 Here, either a full covariance matrix $\boldsymbol\Sigma^{(l)}$ or a low-rank parametrization for the covariance is possible. For the latter, $\boldsymbol\Sigma^{(l)}  = \boldsymbol F^{(l)}{\boldsymbol F^{(l)}}^T + \boldsymbol D^{(l)}$ with $\boldsymbol F^{(l)}$ being the factor part of low-rank form of covariance matrix and $D^{(l)}$ is the diagonal part of low-rank form of covariance matrix. This drastically reduces the number of parameters  and allows for efficient computations of the determinant and inverse matrix. 
A particularly interesting case is when $\boldsymbol F^{(l)}$ has rank zero in which case we retain independence between the components but some penalization in the variability of the $\alpha_{kj}^{(l)}$'s. 
 Under the parametrization \eqref{par_approx}-\eqref{stuct_depend}, the parameters $\{\bm\xi^{(l)},l=1,...,L-1\}$ and $\{\bm\Sigma^{(l)},l=1,...,L-1\}$ are added to the parameter vector $\bm\eta$. Then the reparametrization trick is also performed for these parameters using the default representations for MVN and LFMVN, available out of the box in \hyperlink{https://pytorch.org/docs/stable/distributions.html}{PyTorch probabilities}. 
 


\subsection{Optimization by stochastic gradient}

For simplicity, we assume here that there is no dispersion parameters, so the target distribution is $p(\bm\beta,\bm\gamma|\D)$. We can rewrite the ELBO~\eqref{ELBO} as
\begin{align}
\mathcal{L}_{VI}(\boldsymbol\eta)=& \sum_{\boldsymbol\gamma \in \Gamma}\int_{\beta_\gamma}q_{\boldsymbol\eta}(\boldsymbol\beta,\boldsymbol\gamma)
[\log p(\D|\boldsymbol\beta,\boldsymbol\gamma) -\log\tfrac{q_{\boldsymbol\eta}(\boldsymbol\beta,\boldsymbol\gamma)}{p(\boldsymbol\beta,\boldsymbol\gamma)}]\text{d}\boldsymbol\beta. \label{ELBO2}
\end{align}
Due to the huge computational cost in the computation of gradients when $\Gamma$ and $\D$ are large, stochastic gradient methods using  Monte Carlo estimates for obtaining unbiased estimates of the gradients have become the standard approach for variational inference in such situations. Both the reparametrization trick and minibatching~\citep{kingma2015variational,blundell2015weight} are further applied.

Another complication in our setting is the discrete nature of $\bm\gamma$. Following~\citet{gal2017concrete}, we relax the Bernoulli distribution~\eqref{stuct_approx} with the \emph{Concrete distribution}:
\begin{align}
    \tilde\gamma=\gamma_{tr}(\nu,\delta;\alpha)=\text{sigmoid}(({\text{logit}(\alpha)-\text{logit}(\nu))/\delta}), \quad \nu\sim\text{Unif}[0,1],\label{repar_gamma}
\end{align}
where $\delta$ is a tuning parameter, which is selected to take some small value. In the zero limit, $\tilde\gamma$ reduces to a Bernoulli$(\alpha)$ variable.
Combined with the reparametrization of the $\beta$'s,
\begin{align}
    \beta=\beta_{tr}(\varepsilon;\kappa,\tau)=\kappa+\tau\varepsilon, \quad \varepsilon \sim N(0,1)\label{repar_beta}
\end{align}
we define the following approximation to the
ELBO:
\begin{align}
\mathcal{L}_{VI}^{\delta}(\boldsymbol\eta):= \int_{\boldsymbol\nu}\int_{\bm\varepsilon}q_{\bm\nu,\bm\varepsilon}(\bm\nu,\bm\varepsilon)[&\log p(\D|\beta_{tr}(\bm\varepsilon,\bm\kappa,\bm\tau),\gamma_{tr}(\bm\nu,\bm\alpha,\delta)) -\nonumber\\
&
\log\tfrac{q_{\boldsymbol\eta}(\beta_{tr}(\bm\varepsilon,\bm\kappa,\bm\tau),
                               \gamma_{tr}(\bm\nu,\bm\alpha,\delta))}
          {p(\beta_{tr}(\bm\varepsilon,\bm\kappa,\bm\tau),
                               \gamma_{tr}(\bm\nu,\bm\alpha,\delta))}]\text{d}\boldsymbol\varepsilon\text{d}\bm\nu
\end{align}
where the transformations on vectors are performed elementwise.
Further, due to that $\text{d}\boldsymbol\varepsilon\text{d}\bm\nu$ does not depend on $\boldsymbol\eta$, we can change the order of integration and differentiation when taking the gradient of $\mathcal{L}_{VI}^{\delta}(\boldsymbol\eta)$:
\begin{align}
\nabla_{\bm\eta}\mathcal{L}_{VI}^{\delta}(\boldsymbol\eta)= \int_{\boldsymbol\nu}\int_{\bm\varepsilon}q_{\bm\nu,\bm\varepsilon}(\bm\nu,\bm\varepsilon)
\nabla_{\bm\eta}[&\log p(\D|\beta_{tr}(\bm\varepsilon,\bm\kappa,\bm\tau),\gamma_{tr}(\bm\nu,\bm\alpha,\delta) -\nonumber\\
&
\log\tfrac{q_{\boldsymbol\eta}(\beta_{tr}(\bm\varepsilon,\bm\kappa,\bm\tau),
                               \gamma_{tr}(\bm\nu,\bm\alpha,\delta))}
          {p(\beta_{tr}(\bm\varepsilon,\bm\kappa,\bm\tau),
                               \gamma_{tr}(\bm\nu,\bm\alpha,\delta))}]\text{d}\boldsymbol\varepsilon\text{d}\bm\nu.\label{eq:ELBO.grad}
\end{align}
An unbiased estimator of $\nabla_{\bm\eta}\mathcal{L}_{VI}^{\delta}(\boldsymbol\eta)$
is then given in Proposition~\ref{ltwo}.
\begin{lemma}\label{ltwo}
Assume for all  $m=1,...,M$
$\left(\boldsymbol\nu^{(m)},\boldsymbol\eta^{(m)}\right) \sim q_{\boldsymbol\nu,\boldsymbol\varepsilon}(\boldsymbol\nu,\boldsymbol\eta)$
and
$S$ is a random subset of indices $\{1,...,n\}$ of size N. Also, assume the observations to be conditionally independent.  Then, for any $\delta>0$, an
unbiased estimator for the gradient of ${\mathcal{L}}_{VI}^\delta(\boldsymbol\eta)$  is given by
\begin{align}
\widetilde{\nabla}_{\bm\eta} {\mathcal{L}}_{VI}^\delta(\boldsymbol\eta)
= \frac{1}{M}\sum_{m=1}^M\Big[&\frac{n}{N}\sum_{i\in S}\nabla_{\boldsymbol\eta}\log{p(\boldsymbol{y_i}|\boldsymbol{x_i},\beta_{tr}(\bm\varepsilon^{(m)},\bm\kappa,\bm\tau),\gamma_{tr}(\bm\nu^{(m)},\bm\alpha,\delta))}-\nonumber\\
&\nabla_{\boldsymbol\eta}\log \tfrac{q_{\boldsymbol\eta} (\beta_{tr}(\bm\varepsilon^{(m)},\bm\kappa,\bm\tau),\gamma_{tr}(\bm\nu^{(m)},\bm\alpha,\delta))}{p(\beta_{tr}(\bm\varepsilon^{(m)},\bm\kappa,\bm\tau),\gamma_{tr}(\bm\nu^{(m)},\bm\alpha,\delta))}\Big].\label{eq:grad.LVI}
\end{align}
  \end{lemma}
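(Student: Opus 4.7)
The plan is to combine three randomization layers --- the Monte Carlo draws over $(\bm\nu,\bm\varepsilon)$, the minibatch sampling over data indices, and the reparametrization --- and verify that each contributes an unbiased estimator that composes correctly under the tower property of expectation. The starting point is the gradient identity~\eqref{eq:ELBO.grad}, which already expresses $\nabla_{\bm\eta}\mathcal{L}_{VI}^\delta(\bm\eta)$ as an expectation under $q_{\bm\nu,\bm\varepsilon}$; the interchange of $\nabla_{\bm\eta}$ with integration is legitimate because $q_{\bm\nu,\bm\varepsilon}$ is a fixed product of uniforms and standard normals not depending on $\bm\eta$, and the integrand is smooth in $\bm\eta$ thanks to the differentiability of $\beta_{tr}$ and of the Concrete relaxation $\gamma_{tr}$ for every fixed $\delta>0$.

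First I would use conditional independence of the observations to split the likelihood as $\log p(\D|\bm\beta,\bm\gamma)=\sum_{i=1}^n \log p(\bm y_i|\bm x_i,\bm\beta,\bm\gamma)$, and abbreviate $f_i(\bm\eta;\bm\nu,\bm\varepsilon):=\nabla_{\bm\eta}\log p(\bm y_i|\bm x_i,\beta_{tr},\gamma_{tr})$ and $g(\bm\eta;\bm\nu,\bm\varepsilon):=\nabla_{\bm\eta}\log\tfrac{q_{\bm\eta}(\beta_{tr},\gamma_{tr})}{p(\beta_{tr},\gamma_{tr})}$. Because $S$ is a uniformly random subset of $\{1,\ldots,n\}$ of fixed size $N$, every index satisfies $\mathbb{P}(i\in S)=N/n$, and linearity of expectation gives $\mathbb{E}_S\bigl[\tfrac{n}{N}\sum_{i\in S}f_i\bigr]=\sum_{i=1}^n f_i$ pointwise in $(\bm\nu,\bm\varepsilon)$. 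Since $g$ does not depend on the data, no minibatch rescaling is applied to it. Finally, the draws $(\bm\nu^{(m)},\bm\varepsilon^{(m)})$ are i.i.d.\ from $q_{\bm\nu,\bm\varepsilon}$ and independent of $S$, so by Fubini and linearity, taking the expectation of $\widetilde{\nabla}_{\bm\eta}\mathcal{L}_{VI}^\delta(\bm\eta)$ collapses the $M$-average and yields $\mathbb{E}_{\bm\nu,\bm\varepsilon}\bigl[\sum_{i=1}^n f_i - g\bigr]$, which by~\eqref{eq:ELBO.grad} equals $\nabla_{\bm\eta}\mathcal{L}_{VI}^\delta(\bm\eta)$.

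The main point requiring care, and essentially the only place a mistake could plausibly arise, is the asymmetric scaling of the two terms under minibatching: the likelihood contribution must be multiplied by $n/N$ and summed over $S$, whereas the variational-to-prior ratio term must be left untouched. Rescaling both uniformly would inject a bias of order $(n/N-1)$ into the KL piece; the specific form in~\eqref{eq:grad.LVI} is precisely what avoids this. Beyond this bookkeeping, the argument reduces to repeated use of linearity of expectation, independence of the three sources of randomness, and the standard reparametrization justification for exchanging $\nabla_{\bm\eta}$ with integration under $q_{\bm\nu,\bm\varepsilon}$.
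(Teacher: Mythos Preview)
Your proof is correct and follows essentially the same approach as the paper: both start from the gradient identity~\eqref{eq:ELBO.grad}, use the Monte Carlo average over the i.i.d.\ draws $(\bm\nu^{(m)},\bm\varepsilon^{(m)})$ for unbiasedness of the outer expectation, split the log-likelihood via conditional independence, and then invoke uniform subsampling to obtain an unbiased minibatch estimate of the resulting sum. Your version is simply more explicit---spelling out $\mathbb{P}(i\in S)=N/n$, the tower property, and the asymmetric treatment of the KL term---whereas the paper's proof states these steps more tersely.
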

\begin{proof}
From~\eqref{eq:ELBO.grad} we have that
\begin{align*}
\frac{1}{M}\sum_{m=1}^M
\nabla_{\bm\eta}[&\log p(\D|\beta_{tr}(\bm\varepsilon^{(m)},\bm\kappa,\bm\tau),\gamma_{tr}(\bm\nu^{(m)},\bm\alpha,\delta) -
\log\tfrac{q_{\boldsymbol\eta}(\beta_{tr}(\bm\varepsilon^{(m)},,\bm\kappa,\bm\tau),
                               \gamma_{tr}(\bm\nu^{(m)},\bm\alpha,\delta))}
          {p(\beta_{tr}(\bm\varepsilon^{(m)},\bm\kappa,\bm\tau),
                               \gamma_{tr}(\bm\nu,^{(m)},\bm\alpha,\delta))}]
\end{align*}
is an unbiased estimate of the gradient. 
Further, since we assume the observations to be conditionally independent, we have
\begin{align*}
    \nabla_{\bm\eta}\log p(\D|\beta_{tr}(\bm\varepsilon,\bm\kappa,\bm\tau),\gamma_{tr}(\bm\nu,\bm\alpha,\delta))
    =\sum_{i=1}^n\nabla_{\bm\eta}\log p(\bm y_i|\bm x_i;\beta_{tr}(\bm\varepsilon,\bm\kappa,\bm\tau),\gamma_{tr}(\bm\nu,\bm\alpha,\delta)),
\end{align*}
for which an unbiased estimator can be constructed through a random subset, showing the result. 
\end{proof}

\begin{algorithm}[h]
   \caption{ Doubly stochastic variational inference step}
   \label{alg:dsvi}
\begin{algorithmic}
   \STATE \textbf{sample} $N$ indices uniformly from $\{1,...,n\}$ defining $S$;
   \FOR{$m$ {\bfseries in} $\{1,...,M\}$}
   \FOR {$(k,j,l)\in\mathcal{B}$}
   \STATE \textbf{sample} $\nu_{kl}^{(l)}\sim \text{Unif}[0,1]$ and $\varepsilon_{kj}^{(l)}\sim N(0,1)$ 
   \ENDFOR
   \ENDFOR
  \STATE \textbf{calculate} $\widetilde{\nabla}_{\bm\eta} {\mathcal{L}}_{VI}^\delta(\boldsymbol\eta)$ according to~\eqref{eq:grad.LVI}
    \STATE \textbf{update} ${\boldsymbol\eta} \leftarrow {\boldsymbol\eta} + \boldsymbol{A} \widetilde{\nabla}_{\bm\eta}{{\mathcal{L}}}_{VI}^\delta(\boldsymbol\eta)$ 
\end{algorithmic}
\end{algorithm}

Algorithm~\ref{alg:dsvi}  describes one iteration of a doubly stochastic variational inference approach where updating is performed on the parameters for the case of mean field assumption (for simplicity).  
The set $\mathcal{B}$ is the collection of all combinations $j,k,l$  in the network. The matrix of learning rates $\boldsymbol{A}$ will always be diagonal, allowing for different step sizes on the parameters involved.
Following \citet{blundell2015weight}, constraints of $\tau^{(l)}_{kj}$ are incorporated by means of the reparametrization
$\tau^{(l)}_{kj} =\log(1+\exp(\rho^{(l)}_{kj}))$ where  $\rho^{(l)}_{kj}\in\mathcal{R}$.
Typically, updating is performed over a full \emph{epoch}, in which case the observations are divided into $n/N$ subsets and updating is performed sequentially over all subsets.

In case of the dependence structure~ \eqref{stuct_depend}, $\bm\alpha$ is sampled instead while $\bm\xi^{(l)}$ and the components of $\bm\Sigma^{(l)}$ go into $\bm\eta$.
Constraints on $\alpha^{(l)}_{kj}$ are incorporated by means of the reparametrization $
\alpha^{(l)}_{kj} = ({1+\exp(-\omega^{(l)}_{kj})})^{-1}$ with $\omega^{(l)}_{kj}\in\mathcal{R}$

Note that in the suggested algorithm partial derivatives with respect to marginal inclusion probabilities, as well as mean and standard deviation terms of the weights can be calculated by the usual backpropagation algorithm on a neural network. This algorithm assumes a known dispersion parameter $\phi$, but can be easily generalized to include learning about $\phi$ as well.

\subsection{Prediction}\label{OtherInf}

Once the estimates $\widehat{\boldsymbol\eta}$  of the parameters $\boldsymbol\eta$  of the variational approximating distribution are obtained, we go back to the original discrete model for $\bm\gamma$ (setting $\delta= 0$). Then, there are several ways to proceed with predictive inference. We list these below.

\paragraph{Fully Bayesian model averaging}
In this case, define 
\begin{equation}
\hat{p}(\Delta|\D) = \frac{1}{R}\sum_{r=1}^{R}p(\Delta|\bm\beta^r,\bm\gamma^r)\label{pred.bma}
\end{equation}
where $(\bm\beta^r,\bm\gamma^r)\sim q_{\hat{\bm\eta}}(\bm\beta,\bm\gamma)$.
This procedure
takes uncertainty in both the model structure $\bm\gamma$ and the parameters $\bm\beta$ into account in a formal Bayesian setting.  
A bottleneck of this approach  is that we have to both sample from a huge approximate posterior distribution of parameters and models \emph{and} keep all of the components of $\widehat{\boldsymbol\eta}$ stored during the prediction phase, which might be computationally and memory inefficient.

\paragraph{The posterior mean based model \citep{wasserman2000bayesian}} In this case we put $\beta^{(l)}_{kj} = \hat E\{\beta^{(l)}_{kj}|\D\}$ where
\begin{align*}
  E\{\beta^{(l)}_{kj}|\D\}
  =&p(\gamma^{(l)}_{kj}=1|\D)E\{\beta^{(l)}_{kj}|\gamma^{(l)}_{kj}=1,\D\}
  \approx\hat\alpha^{(l)}_{kj}\hat\kappa^{(l)}_{kj}.
\end{align*}
Here $\hat{\alpha}_{kj}^{(l)}$ is either the estimate of ${\alpha}_{kj}^{(l)}$ obtained through the variational inference procedure or, in case of dependence structure $\boldsymbol\alpha$ $E\{\alpha_{kj}^{(l)}|\mathcal{D}\}$. In the latter case, one formally would want to integrate out $\boldsymbol\alpha$ instead but this is not quite feasible in practice and extra sampling is avoided.
This approach specifies one dense model $\hat{\bm\gamma}$ with no sparsification. At the same time, no sampling is needed.

\paragraph{The median probability model \citep{barbieri2004optimal}}
This approach is based on the notion of a median probability model, which has been shown to be optimal in terms of predictions in the context of simple linear models. 
Here, we set $\gamma^{(l)}_{kj} = \text{I}(\hat\alpha^{(l)}_{kj}>0.5)$ while $\beta^{(l)}_{kj} \sim \gamma^{(l)}_{kj}N(\hat\kappa^{(l)}_{kj},{{\hat{\tau}^{2(l) }}_{kj}})$. A model averaging approach similar to~\eqref{pred.bma} is then applied.
Within this approach, we significantly sparsify the network and only sample from the distributions of those weights that have marginal inclusion probabilities above 0.5.

\paragraph{Median probability model-based inference combined with parameter posterior mean}
Here, again, we set $\gamma^{(l)}_{kj}=\text{I}(\hat\alpha^{(l)}_{kj}>0.5)$
but now we use $\beta_{kj}^{(l)}=\gamma^{(l)}_{kj}\hat\kappa^{(l)}_{kj}$. Similarly to the posterior mean-based model, no sampling is needed
but in addition, we only need to store the variational parameters of  $\widehat{\boldsymbol\eta}$ corresponding to marginal inclusion probabilities above 0.5. Hence, we significantly sparsify the BNN of interest and reduce the computational cost of the predictions drastically.

\paragraph{Post-training} Once it is decided to make inference based on a selected model, one might take several additional iterations of the training algorithm concerning the parameters of the models, having the architecture-related parameters fixed.  This  might give additional improvements in terms of the quality of inference as well as make the training steps much easier since the number of parameters is reduced dramatically. This is so since one does not have to estimate marginal inclusion probabilities $\boldsymbol\alpha$ 
any longer. Moreover, the number of weights $\beta_{jk}^{(l)}$'s corresponding to $\gamma_{jk}^{(l)}=1$ 
to make inference on is typically significantly reduced due to the sparsity induced by using the selected median probability model. It is also possible to keep the $\alpha_{jk}^{(l)}$'s fixed but still allow the  $\gamma_{jk}^{(l)}$'s to be random. 

\paragraph{Other model selecting criteria and alternative thresholding}

The median probability model is (at least in theory) not always feasible in the sense that one needs at least one connected path across all of the layers with all of the weights linking the neurons having marginal inclusion above 0.5. One way to resolve the issue is to use the most probable model (the model with the largest marginal posterior probability) instead of the median probability model. Then, conditionally on its configuration, one can sample from the distribution of the parameters, select the mean (mode) of the parameters, or post-train the distributions of the parameters. Other model selection criteria,  including DIC and WAIC,  can be used in the same way as the most probable model. Another heuristic way to tackle the issue is to replace conditioning on $\text{I}(\gamma^{(l)}_{kj}>0.5)$ with $\text{I}(\gamma^{(l)}_{kj}>\lambda)$, where $\lambda$ is a tuning parameter. The latter might also improve predictive performance in case too conservative priors on the model configurations are used. At the same time, we are not addressing the methods described in this paragraph in our experiments and rather leave them for further research. 

\section{Applications}\label{section4}

In-depth studies of the suggested variational approximations in the context of \emph{linear} regression models have been performed in earlier studies,  including multiple synthetic and real data examples with the aims of both recovering meaningful relations and predictions~\citep{carbonetto2012scalable,Hernandez2015}. The results from these studies show that the approximations based on the suggested variational family distributions are reasonably precise and indeed scalable, but can be biased. We will not address toy examples and simulation-based examples in this article and rather refer the curious readers to the very detailed and comprehensive studies in the references mentioned above, whilst we will address some more complex examples here. In particular, we will address the classification of MNIST \citep{lecun1998mnist} 
and fashion-MNIST \citep[FMNIST][]{xiao2017fashion} images as well as the PHONEME data \citep{hastie1995penalized}. Both MNIST and FMNIST datasets comprise of $70\,000$ grayscale images (size 28x28) from 10 categories (handwritten digits from 0 to 9, and "Top", "Trouser", "Pullover", "Dress", "Coat", "Sandal", "Shirt", "Sneaker", "Bag", and "Ankle Boot" Zalando's fashion items respectively), with $7\,000$ images per category. The training sets consist of $60\,000$ images, and the test sets have $10\,000$ images. For the PHONEME dataset, we have 256 covariates and 5 classes in the responses. In this dataset, we have $3\,500$ observations in the training set and $1\,000$ - in the test set. The PHONEME data are extracted from the TIMIT database (TIMIT Acoustic-Phonetic Continuous Speech Corpus, NTIS, US Dept of Commerce), which is a widely used resource for research in speech recognition. This dataset was formed by selecting five phonemes for
classification based on a digitized speech from this database.  The
phonemes are transcribed as follows: "sh" as in "she", "dcl" as in
"dark", "iy" as the vowel in "she", "aa" as the vowel in "dark", and
"ao" as the first vowel in "water".  

\paragraph{Experimental design} For all the  datasets, we address a dense neural network with the ReLU activation function, and multinomially distributed observations. For the two first examples, we have  10 classes and 784 input explanatory variables (pixels) while for the third one, we have 256 input variables and 5 classes. In all three cases, the network has 2 hidden layers with 400, and 600 neurons correspondingly. Priors for the parameters and model indicators were chosen according  to~\eqref{eq:prior.beta} with parameters in the priors specified through an empirical Bayes approach.
The inference was performed using the suggested doubly stochastic variational inference approach (Algorithm~\ref{alg:dsvi}) on 250 epochs with a batch size of 100.  $M$ was set to $1$ to reduce computational costs and due to the fact that this choice of $M$ is argued to be sufficient in combination with the reparametrization trick  \citep{Gal2016Uncertainty}. Up to  20  first epochs were used for pre-training of the models and parameters as well as empirically (aka Empirical Bayes) estimating the hyperparameters of the priors ($a_\psi,b_\psi,a_\beta,b_\beta$) through adding them into the computational graph. After that, the main training cycle began (with fixed hyperparameters on the priors).  We used the ADAM stochastic gradient ascent optimization \citep{kingma2014adam} with the diagonal matrix $\boldsymbol A$ in Algorithm~\ref{alg:dsvi} and the diagonal elements specified in Tables~\ref{tA} and S-1 for pre-training, and the main training stage. Typically, one would maximize the marginal likelihood for the Empirical Bayes methods, but since we do not have it available for the addressed models, its lower bound (ELBO) was used in the pre-training stage. After 250 training epochs, post-training was performed. When post-training the parameters, either with fixed marginal inclusion probabilities or with the median probability model, we ran additional $50$ epochs of the optimization routine 
 with $\boldsymbol A$ specified in the bottom rows of Tables~\ref{tA} and S-1. For the fully Bayesian model averaging approach, we used  both $R=1$ and $R=10$. Even though $R=1$ can give a poor Monte Carlo estimate of the prediction distribution, it can be of interest due to high sparsification. All the PyTorch implementations used in the experiments are available  
 \href{https://github.com/aliaksah/Variational-Inference-for-Bayesian-Neural-Networks-under-Model-and-Parameter-Uncertainty}{in our GitHub repository}.
 
\begin{table}[t]
 \small
  \centering
   \caption{\small Specifications of diagonal elements of $\boldsymbol A$ matrices for the step sizes of optimization routines for LBBNN-GP-MF and LBBNN-GP-MVN, see Table~\ref{te} for explanation of the abbreviations. Note that $A_{\omega}$ is only used in LBBNN-GP-MF, while $A_\xi$ and $A_\Sigma$ are only used in LBBNN-GP-MVN. For tuning parameters of LBBNN-GP-LFMVN, see Table~S-1 in the supplementary materials to the paper.}
  \begin{tabular}{l@{\hspace{0.4cm}}c@{\hspace{0.4cm}}ccccccc}
   \toprule
            &$A_\beta$, $A_\rho$&$A_\xi$&$A_{\omega}$&$A_\Sigma$&$A_{a_\psi}$,$A_{b_\psi}$&$A_{a_\beta}$,$A_{b_\beta}$\\
            \hline
Pre-training&0.00010&0.10000&0.10000&0.10000&0.00100&0.00001\\
Training    &0.00010&0.01000&0.00010&0.00010&0.00000&0.00000
\\
Post-training&0.00010&0.00000&0.00000&0.00000&0.00000&0.00000\\
\bottomrule
  \end{tabular}

 \label{tA}
 
\end{table}

\newcounter{num}
\setcounter{num}{10}
\ifodd\value{num} 
\begin{minipage}[b]{1\linewidth}\centering
\begin{minipage}[b]{0.7\linewidth}
\vspace{0.1cm}
\begin{table}[H]
 \caption{\small Inference possibilities. \textbf{SM} is a single sample, \textbf{MA} - model (sample) averaging, \textbf{MN} - posterior mean based inference, \textbf{MED} - selecting the median probability model, \textbf{WAIC}, \textbf{DIC}, \textbf{FIC} - selecting w.r.t. the corresponding criterion, \textbf{ADHOC} - ad hoc model selection, \textbf{PT} - post-training of the parameter, \textbf{PE} - point estimates of the predictions, \textbf{CI} - credible intervals for the predictions.}
  \label{t0}
  \small
  \centering
{
  \begin{tabular}{l@{\hspace{1.0cm}}c@{\hspace{0.7cm}}c@{\hspace{0.7cm}}c@{\hspace{0.7cm}}c@{\hspace{0.7cm}}c@{\hspace{0.7cm}}c}
   \toprule
Method&Full BNN&Gauss.&Mixt.&Concr.&Hors.&Studied\\\hline
&&\multicolumn{4}{l}{\hspace{0mm} \textbf{Dense}}\\\hline
$\text{SM}$&Joint&Par&Par&Joint&Par&\textbf{Yes}\\
$\text{MA}$&Joint&Par&Par&Joint&Par&\textbf{Yes}\\
$\text{MN}$&Joint&Par&Par&Joint&Par&\textbf{Yes}\\
\hline
&&\multicolumn{4}{l}{\hspace{0mm} \textbf{Model selection}}\\\hline
$\text{MED}$&+&-&-&-&-&\textbf{Yes}\\
$\text{WAIC, DIC, FIC}$&+&-&-&-&-&\textbf{No}\\
$\text{PRUNE}$&?&?&?&?&+&\textbf{Yes}\\
\hline
&&\multicolumn{4}{l}{\hspace{0mm} \textbf{Post training}}\\\hline
$\text{PT}$&MA/MS&-&-&MA&MS&\textbf{Yes}\\
\hline
&&\multicolumn{4}{l}{\hspace{0mm} \textbf{Inference}}\\\hline
$\text{PE}$ (Acc. All)&Joint&Par&Par&Joint&Par&\textbf{Yes}\\
$\text{CI}$ (Acc. 95\%)&Joint&Par&Par&Joint&Par&\textbf{Yes}\\
\bottomrule
  \end{tabular}
  }
\end{table}
\end{minipage}
\hspace{0.5cm}
\begin{minipage}[b]{0.25\linewidth}
\begin{table}[H]
 \caption{\small Medians and standard deviations of the average (per layer) marginal inclusion probability (see the text for definition) for our model for both MNIST and FMNIST data across 10 simulations.}
 \label{t3}
  \small
  \centering
  \begin{tabular}{l@{\hspace{0.4cm}}c@{\hspace{0.4cm}}c}
   \toprule
\multicolumn{3}{l}{\textbf{MNIST data}}\\
\hline
Layer& Med.  & SD.\\\hline
$\rho(\gamma^{(1)}|\D)$&0.0520&0.0005\\
$\rho(\gamma^{(2)}|\D)$&0.0598&0.0003\\
$\rho(\gamma^{(3)}|\D)$&0.2217&0.0064\\
\hline
\multicolumn{3}{l}{\textbf{FMNIST data}}\\
\hline
Layer& Med.  & SD.\\\hline
$\rho(\gamma^{(1)}|\D)$&0.0665&0.0004\\
$\rho(\gamma^{(1)}|\D)$&0.0613&0.0005\\
$\rho(\gamma^{(1)}|\D)$&0.2013&0.0051\\
\bottomrule
  \end{tabular}
\end{table}
\end{minipage}
\vspace{0.1cm}
 \end{minipage}
 \fi
We report results for \textit{our model} \textbf{LBBNN} applied with the Gaussian priors  (\textbf{GP}) for the slab components of $\beta$'s combined with variational inference based on Mean-Field (\textbf{MF}), MVN (\textbf{MVN}) and Low Factor MVN (for \textbf{LFMVN}, the predictions' results are reported in the supplemental material) dependence structures between the latent indicators. We use the combined names \textbf{LBBNN-GP-MF}, \textbf{LBBNN-GP-MVN}, and \textbf{LBBNN-GP-LFMVN} respectively to denote combination of model, prior and variational distribution.  In Section 4 of the supplementary materials to the paper, results from our early preprint \citep{hubin2019combining} for MNIST and FMNIST datasets where the hyperparameters are fixed ~\citep[corresponding to the setting also considered by][]{bai2020efficient}
are reported.

\begin{table}[!h]
 \caption{\small Abbreviations and evaluation metrics used when reporting the results of the experiments.}
  \label{te}
  \footnotesize
  \centering
\begin{tabular}{l|l}\hline
\textbf{Model}&\textbf{ Meaning}\\ \hline
BNN & Bayesian neural network\\
LBBNN & Latent binary Bayesian neural network\\
\hline \textbf{Parameters prior} &\textbf{ Meaning} \\ \hline 
GP&Independent Gaussian priors for weights\\
MGP&Independent mixture of Gaussians prior for  weights\\
HP&Independent horseshoe priors for  weights\\
\hline
\textbf{Inference} & \textbf{Meaning} \\ \hline
MF&Mean-field variational inference\\
MVN &Multivariate Gaussian structure for the inclusion probabilities\\ 
LFMVN &Low factor for the covariance of MVN structure for the inclusion probabilities\\ 
\hline $\bm \gamma$ &\textbf{Meaning} \\ \hline
SIM & Inclusion of the weights is drawn from the posterior of inclusion indicators\\
ALL & All weights are used\\
MED & Weights corresponding to the median probability model are used\\
PRN & Not pruned using a threshold-based rule weights are used\\
\hline $\bm \beta$& \textbf{Meaning}\\
  \hline
SIM & The included weights are drawn from their posterior\\
MEA & Posterior means of the weights are used\\
   \hline
\textbf{$R$} & \textbf{Meaning} \\    \hline
 10 & 10 samples are drawn\\
 1 & 1 sample is drawn or posterior means are used\\\hline
 \textbf{Evaluation metric} & \textbf{Meaning}\\\hline All cl Acc & Accuracy computed for all samples in the test set\\
 0.95 threshold Acc& Accuracy computed for those samples in the test set where the maximum \\& (across classes)  model averaged predictive posterior exceeds $0.95$\\
 0.95 threshold & Number of samples in the test set where the maximum \\Num.cl& (across classes)  model averaged predictive posterior exceeds $0.95$\\
 Dens. level & Fraction of weights that are used to make predictions\\
 Epo. time & Average time elapsed per epoch of training\\
   \hline
\end{tabular}
\end{table}

In addition, we also used several relevant \textit{baselines}. In particular, we addressed a standard Dense BNN with Gaussian priors and mean-field variational inference~\citep{graves2011practical}, denoted as \textbf{BNN-GP-MF}, which can be seen as a special case of our original model with all $\gamma^{(l)}_{kj}$ being fixed and equal to 1 and no prior on the variance components of weights. This model is important in measuring how predictive power is changed due to introducing sparsity.
Furthermore, we report the results for a Dense BNN with mixture priors (\textbf{BNN-MGP-MF}) with two Gaussian components of the mixtures \citep{blundell2015weight} with probabilities of 0.5 for each and variances equal to 1 and $e^{-6}$ correspondingly. Additionally, we have addressed two popular sparsity-inducing approaches, in particular, a dense network  with Concrete dropout (\textbf{BNN-GP-CMF}) \citep{gal2017concrete} and a dense network  with Horseshoe priors (\textbf{BNN-HP-MF}) \citep{louizos2017bayesian}. Finally, a frequentist fully connected  neural network (\textbf{FNN}) (with posthoc weight pruning) was used as a more basic baseline. We only report the results for FNN in the supplementary materials to make the experimental design cleaner.  All of the baseline methods (including the FNN) also have 2 hidden layers with 400, and 600 neurons correspondingly. They were trained for 250 epochs with an Adam optimizer (with a learning rate $a=0.0001$ for all involved parameters) and a batch size equal to 100. 
For the BNN with Horseshoe priors, we are reporting statistics separately before and after ad-hoc pruning (\text{PRN}) of the weights. Post-training (when necessary) was performed for additional 50 epochs. For FNN, for all three experiments, we performed weight and neuron pruning \citep{blalock2020state} to have the same sparsity levels as those obtained by the Bayesian approaches to make them directly comparable. Pruning of FNN was based on removing the corresponding share of weights/neurons having the smallest magnitude (absolute value). No uncertainty was taken into consideration and neither was structure learning considered for FNNs.

For prediction, several methods were described in Section~\ref{OtherInf}. All essentially boils down to choices on how to treat the model parameters $\bm\gamma$ and the weights $\bm\beta$. For $\bm\gamma$, we can either simulate (\textbf{SIM}) from the (approximate) posterior or use the median 
probability model (\textbf{MED}). An alternative for \textbf{BNN-HP-MF} here is the pruning method (\textbf{PRN}) applied in  \citet{louizos2017bayesian} . We also consider the choice of including all weights for some of the baseline methods (\textbf{ALL}).
For $\bm\beta$, we consider either sampling from the (approximate) posterior (\textbf{SIM}) or using the posterior mean (\textbf{MEA}). Under this notation, the fully Bayesian model averaging from Section~\ref{OtherInf} is denoted as \textbf{SIM SIM}, whilst {the posterior mean based model} as \textbf{ALL MEA}, {the median probability model} as \textbf{MED SIM}, the and {the median probability model combined with parameter posterior mean} as \textbf{MED MEA}.

We then evaluated accuracies (\textbf{Acc} - the proportion of the correctly classified images). 
Accuracies based on the median probability model (through either $R=1$ or $R=10$) and the posterior mean models were also obtained. Finally, accuracies based on post-training of the parameters with fixed marginal inclusion probabilities and post-training of the median probability model were evaluated. For the cases when model averaging is addressed ($R=10$), we are additionally reporting accuracies when classification is only performed if the maximum model-averaged class probability exceeds 95\%  as suggested by \citet{posch2019variational}. Otherwise, a doubt decision is made~\citep[][sec 2.1]{ripley2007pattern}. In this case, we both
report the accuracy within the classified images  as well as the number of classified images.
Finally,  we are reporting the overall density level (the fraction of $\gamma_{kj}^{(l)}$'s equal to one within at least one of the simulations), for different approaches. 
To guarantee reproducibility, summaries (medians, minimums, maximums) across 10 independent runs of the described experiment $s \in \{1,...,10\}$ were computed for all of these statistics. 
Estimates of the marginal inclusion probabilities~$\hat{p}(\gamma_{kj}^{(l)}=1|\D)$ based on the suggested variational approximations were also computed for all of the weights. In order to compress the presentation of the results, we only present the mean marginal inclusion probabilities for each layer $l$ as $\rho(\gamma^{(l)}|\D) := \frac{1}{p^{(l+1)}p^{(l)}}\sum_{kj}\hat{p}(\gamma_{kj}^{(l)}=1|\D)$, summarized in Table~\ref{t3}. To make the abbreviations used in the reported results more clear, we provide Table~\ref{te} with their short summaries.

 \paragraph{MNIST} The results reported in Tables~\ref{t1} and~\ref{t3} (with some additional results on {LBBNN-GP-LFMVN} and post-training reported in Tables~S-2 and~S-8 in the supplementary material) show that within our LBBNN approach: a) model averaging across different BNNs ($R=10$) gives significantly higher accuracy than the accuracy of a random individual BNN from the model space ($R=1$); b) the median probability model and posterior mean based model also perform significantly better than a randomly sampled model. The performance of  the median probability model and posterior mean-based model is in fact on par with full model averaging; c) according to Table~\ref{t3} and Figure~\ref{Fig:hist1}, for the mean-field variational distribution, the majority of the weights of the models have very low marginal inclusion probabilities for the weights at layers 1 and 2, while more weights have high marginal inclusion probabilities at layer 3 (although a significant reduction also at this layer). This resembles the structure of convolutional neural networks (CNN) where typically one first has a set of sparse convolutional layers, followed by a few fully connected layers. Unlike CNNs the structure of sparsification is learned automatically within our approach; d) for the MVN with full rank structure within variational approximation, the input layer is the most dense, followed by extreme sparsification in the second layer and a moderate sparsification at layer 3; e) the MVN approach with a low factor parametrization of the covariance matrix (results in the supplementary) only provides very moderate sparsification not exceeding 50\% of the weight parameters; f) variations of all of the performance metrics across simulations are low, showing stable behavior across the repeated experiments; 
 g) inference with a doubt option gives almost perfect accuracy, however, this comes at a price of rejecting to classify some of the items.

\begin{table}[!htbp]
 \caption{\small Performance metrics  for the MNIST data  for the compared approaches.
All results are medians across $10$ repeated experiments (with min and max included in parentheses). 
No post-training is used. For further details see Table \ref{te}.}
  \label{t1}
  \footnotesize
  \centering
\begin{tabular}{llll|ccccc}
   \hline
\multicolumn{2}{c}{Prediction}&Model-Prior-&&\multicolumn{1}{c}{All cl}&\multicolumn{2}{c}{0.95 threshold}&Dens.&Epo.\\
$\bm\gamma$&$\bm\beta$&Method&$R$& Acc& Acc&Num.cl&level&time\\
    \hline
SIM&SIM&LBBNN-GP-MF&1&0.968 (0.966,0.970)&-&-&0.090&8.363\\
SIM&SIM&LBBNN-GP-MF&10&0.981 (0.979,0.982)&0.999 &8322&1.000&8.363\\
ALL&MEA&LBBNN-GP-MF&1&0.981 (0.980,0.983)&-&-&1.000&8.363\\ 
MED&SIM&LBBNN-GP-MF&1&0.969 (0.968,0.974)&-&-&0.079&8.363\\ 
MED&SIM&LBBNN-GP-MF&10&0.980 (0.979,0.982)&0.999 &8444&0.079&8.363\\ 
MED&MEA&LBBNN-GP-MF&1&0.981 (0.980,0.983)&-&-&0.079&8.363\\
\hline
SIM&SIM&LBBNN-GP-MVN&1&0.965 (0.964,0.966)&-&-&0.180&9.651\\
SIM&SIM&LBBNN-GP-MVN&10&0.978 (0.976,0.979)&1.000 &7818&1.000&9.651\\
ALL&MEA&LBBNN-GP-MVN&1&0.978 (0.976,0.980)&-&-&1.000&9.651\\ 
MED&SIM&LBBNN-GP-MVN&1&0.968 (0.966,0.969)&-&-&0.163&9.651\\ 
MED&SIM&LBBNN-GP-MVN&10&0.977 (0.975,0.979)&1.000 &7928&0.163&9.651\\ 
MED&MEA&LBBNN-GP-MVN&1&0.974 (0.972,0.976)&-&-&0.163&9.651\\
\hline
ALL&SIM&BNN-GP-MF&1&0.965 (0.965,0.966)&-&-&1.000&5.094\\
ALL&SIM&BNN-GP-MF&10&0.984 (0.982,0.985)&0.999 &8477&1.000&5.094\\
ALL&MEA&BNN-GP-MF&1&0.984 (0.982,0.985)&-&-&1.000&5.094\\
\hline
ALL&SIM&BNN-MGP-MF&1&0.965 (0.964,0.967)&-&-&1.000&5.422\\
ALL&SIM&BNN-MGP-MF&10&0.982 (0.981,0.983)&0.999 &8329&1.000&5.422\\
ALL&MEA&BNN-MGP-MF&1&0.983 (0.981,0.984)&-&-&1.000&5.422\\
\hline
SIM&SIM&BNN-GP-CMF&1&0.982 (0.894,0.984)&-&-&0.226&3.477\\
SIM&SIM&BNN-GP-CMF&10&0.984 (0.896,0.986)&0.995 &9581&1.000&3.477\\
ALL&MEA&BNN-GP-CMF&1&0.984 (0.893,0.986)&-&-&1.000&3.477\\
\hline
SIM&SIM&BNN-HP-MF&1&0.964 (0.962,0.967)&-&-&1.000&4.254\\
SIM&SIM&BNN-HP-MF&10&0.982 (0.981,0.983)&1.000&0003&1.000&4.254\\
ALL&MEA&BNN-HP-MF&1&0.966 (0.963,0.968)&-&-&1.000&4.254\\
PRN&SIM&BNN-HP-MF&1&0.965 (0.962,0.969)&-&-&0.194&4.254\\
PRN&SIM&BNN-HP-MF&10&0.982 (0.981,0.983)&1.000 &0002&0.194&4.254\\
PRN&MEA&BNN-HP-MF&1&0.965 (0.963,0.968)&-&-&0.194&4.254\\
\bottomrule
  \end{tabular}
\end{table}

For other approaches, it is also the case that h) both using the posterior mean-based model and using sample averaging improves accuracy compared to a single sample from the parameter space; i) variability in the estimates of the target parameters is low for the dense BNNs with Gaussian/mixture of Gaussians priors and BNN with horseshoe priors and rather high for the Concrete dropout approach. When it comes to comparing our approach to baselines we notice that j) dense approaches outperform sparse approaches in terms of the accuracy in general; k) Concrete dropout marginally outperforms other approaches in terms of median accuracy, however, it exhibits large variance, whilst our full BNN and the compressed BNN with horseshoe priors yield stable performance across experiments; l) neither our approach nor baselines managed to reach state of the art results in terms of hard classification accuracy of predictions \citep{palvanov2018comparisons}; m) including a 95\% threshold for making a classification results in a very low number of classified cases for the horseshoe priors (it is extremely underconfident),  the Concrete dropout approach seems to be overconfident when doing inference with the doubt option (resulting in lower accuracy but a larger number of decisions), the full BNN, and BNN with Gaussian and mixture of Gaussian priors give less classified cases than the Concrete dropout approach but reach significantly higher accuracy; n) this might mean that the thresholds need to be calibrated towards the specific methods; o) our approach under the mean-field variational approximation and the full rank MVN structure of variational approximation yields the highest sparsity of weights when using the median probability model. Also, q) post-training (results in the supplementary) does not seem to significantly improve either the predictive quality of the models or uncertainty handling; o) all BNN for all considered sparsity levels on a given configuration of the network depth and widths are significantly outperforming the frequentist counterpart (with the corresponding same sparsity levels) in terms of the generalization error. Finally, in terms of computational time, r) as expected FNNs were the fastest in terms of time per epoch, whilst for the Bayesian approaches we see a strong positive correlation between the number of parameters and computational time, where BNN-GP-CMF is the fastest method and LBBNN-GP-MVN is the slowest. All times were obtained whilst training our models on a GeForce RTX 2080 Ti GPU card. Having said that, it is important to notice that the speed difference between the fastest and slowest Bayesian approach is less than 3 times, which given the fact that the time is also influenced by the implementation of different methods and a potentially different load of the server when running the experiments might be considered quite a tolerable difference in practice.

\paragraph{FMNIST}\label{FMNIST}
The same set of approaches, model specifications, and tuning parameters of the algorithms as in the MNIST example were used for this application. The results a)- r) for FMNIST data, based on Tables~\ref{t11}, ~\ref{t3}, and Tables~S-3 and~S-9 in the supplementary meterials as well as in Figure~\ref{Fig:hist1} are completely consistent with the results from the MNIST experiment, however, the predictive performances for all of the approaches are poorer on FMNIST. Also whilst full BNN and BNN with horseshoe priors on FMNIST get lower sparsity levels than on MNIST, Concrete dropout here improves in this sense compared to the previous example. For FNN, the same conclusions as those obtained for the MNIST data set are valid.

\begin{table}[!htbp]
 \caption{\small Performance metrics  for the FMNIST data for the suggested in the article Bayesian approaches to BNN. For further details, see Table~\ref{te} and the caption of Table~\ref{t1}.}
  \label{t11}
  \footnotesize
  \centering
\begin{tabular}{llll|ccccc}
   \hline
\multicolumn{2}{c}{Prediction}&Model-Prior-&&\multicolumn{1}{c}{All cl}&\multicolumn{2}{c}{0.95 threshold}&Dens.&Epo.\\
$\bm\gamma$&$\bm\beta$&Method&$R$& Acc& Acc&Num.cl&level&time\\
    \hline
SIM&SIM&LBBNN-GP-MF&1&0.864 (0.861,0.866)&-&-&0.120&7.969\\
SIM&SIM&LBBNN-GP-MF&10&0.883 (0.881,0.886)&0.995 &4946&1.000&7.969\\
ALL&MEA&LBBNN-GP-MF&1&0.882 (0.879,0.887)&-&-&1.000&7.969\\
MED&SIM&LBBNN-GP-MF&1&0.867 (0.864,0.871)&-&-&0.108&7.969\\
MED&SIM&LBBNN-GP-MF&10&0.883 (0.880,0.886)&0.995 &5025&0.108&7.969\\
MED&MEA&LBBNN-GP-MF&1&0.880 (0.877,0.886)&-&-&0.108&7.969\\
\hline
SIM&SIM&LBBNN-GP-MVN&1&0.858 (0.854,0.859)&-&-&0.156&9.504\\
SIM&SIM&LBBNN-GP-MVN&10&0.879 (0.874,0.880)&0.995 &4503&1.000&9.504\\
ALL&MEA&LBBNN-GP-MVN&1&0.875 (0.873,0.876)&-&-&1.000&9.504\\ 
MED&SIM&LBBNN-GP-MVN&1&0.865 (0.860,0.866)&-&-&0.129&9.504\\ 
MED&SIM&LBBNN-GP-MVN&10&0.877 (0.875,0.879)&0.995 &4694&0.129&9.504\\ 
MED&MEA&LBBNN-GP-MVN&1&0.871 (0.868,0.875)&-&-&0.129&9.504\\
    \hline
ALL&SIM&BNN-GP-MF&1&0.864 (0.863,0.866)&-&-&1.000&5.368\\
ALL&SIM&BNN-GP-MF&10&0.893 (0.890,0.894)&0.997 &5089&1.000&5.368\\
ALL&MEA&BNN-GP-MF&1&0.886 (0.882,0.888)&-&-&1.000&5.368\\
\hline
ALL&SIM&BNN-MGP-MF&1&0.867 (0.866,0.868)&-&-&1.000&4.803\\
ALL&SIM&BNN-MGP-MF&10&0.893 (0.892,0.897)&0.996 &5151&1.000&4.803\\
ALL&MEA&BNN-MGP-MF&1&0.888 (0.885,0.890)&-&-&1.000&4.803\\
\hline
SIM&SIM&BNN-GP-CMF&1&0.896 (0.820,0.902)&-&-&0.094&3.369\\
SIM&SIM&BNN-GP-CMF&10&0.897 (0.823,0.901)&0.942 &8825&1.000&3.369\\
ALL&MEA&BNN-GP-CMF&1&0.896 (0.821,0.901)&-&-&1.000&3.369\\
\hline
SIM&SIM&BNN-HP-MF&1&0.864 (0.863,0.869)&-&-&1.000&4.613\\
SIM&SIM&BNN-HP-MF&10&0.887 (0.886,0.889)&1.000 &0181&1.000&4.613\\
ALL&MEA&BNN-HP-MF&1&0.867 (0.861,0.868)&-&-&1.000&4.613\\
PRN&SIM&BNN-HP-MF&1&0.865 (0.860,0.868)&-&-&0.302&4.613\\
PRN&SIM&BNN-HP-MF&10&0.887 (0.884,0.888)&1.000 &0179&0.302&4.613\\
PRN&MEA&BNN-HP-MF&1&0.865 (0.862,0.869)&-&-&0.302&4.613\\
\bottomrule
  \end{tabular}
\end{table}


\paragraph{PHONEME}\label{sounds}

Finally, the same set of approaches, model specifications (except for having 256 input covariates and 5 classes  of the responses), and tuning parameters of the algorithms as in the MNIST and FMNIST examples were used for the classification of PHONEME data. The results a)- r) for the PHONEME data, based on Tables~\ref{t111},~\ref{t3}, and Tables~S-4 and~S-10 in the supplementary are also overall consistent with the results from the MNIST and FMNIST experiments, however, predictive performances for all of the approaches are better than on FMNIST yet poorer than on MNIST. All of the methods, where sparsifications are possible, gave a lower sparsity level for this example. Yet, rather considerable sparsification is still shown to be feasible. For FNN, the same conclusions as those obtained for MNIST and FMNIST data sets are valid, though the deterioration of performance of FNN here was less drastic. Also, as we demonstrate in Figures S-5 - S-8 in the supplementary materials, the conclusions are consistent across various width configurations of Bayesian neural networks. Also, sparsification increases with increased width for all the methods, yet the growth in sparsity is not proportional to the growth of width. 

\begin{table}[!t]
 \caption{\small Performance metrics  for the PHONEME data for the suggested in the article Bayesian approaches to BNN. For further detail see Table~\ref{te} and the caption of Table~\ref{t1}.}
  \label{t111}
  \footnotesize
  \centering
\begin{tabular}{llll|ccccc}
   \hline
\multicolumn{2}{c}{Prediction}&Model-Prior-&&\multicolumn{1}{c}{All cl}&\multicolumn{2}{c}{0.95 threshold}&Dens.&Epo.\\
$\bm\gamma$&$\bm\beta$&Method&$R$& Acc& Acc&Num.cl&level&time\\
    \hline
SIM&SIM&LBBNN-GP-MF&1&0.913 (0.898,0.929)&-&-&0.371&0.433\\
SIM&SIM&LBBNN-GP-MF&10&0.927 (0.923,0.933)&0.992 &690&1.000&0.433\\
ALL&MEA&LBBNN-GP-MF&1&0.925 (0.921,0.933)
&-&-&1.000&0.433\\
MED&SIM&LBBNN-GP-MF&1&0.923 (0.910,0.928)&-&-&0.307&0.433\\
MED&SIM&LBBNN-GP-MF&10&0.925 (0.912,0.934)&0.984&757&0.307&0.433\\
MED&MEA&LBBNN-GP-MF&1&0.925 (0.913,0.932)&-&-&0.307&0.433\\
\hline
SIM&SIM&LBBNN-GP-MVN&1&0.919 (0.911,0.927)&-&-&0.255&0.505\\
SIM&SIM&LBBNN-GP-MVN&10&0.929 (0.927,0.935)&0.995 &649&1.000&0.505\\
ALL&MEA&LBBNN-GP-MVN&1&0.926 (0.918,0.931)&-&-&1.000&0.505\\ 
MED&SIM&LBBNN-GP-MVN&1&0.925 (0.916,0.929)&-&-&0.225&0.505\\ 
MED&SIM&LBBNN-GP-MVN&10&0.929 (0.925,0.933)&0.995 &668&0.225&0.505\\ 
MED&MEA&LBBNN-GP-MVN&1&0.924 (0.921,0.928)&-&-&0.225&0.505\\
    \hline
ALL&SIM&BNN-GP-MF&1&0.915	(0.907,0.919)&-&-&1.000&0.203\\
ALL&SIM&BNN-GP-MF&10&0.919 (0.900,0.929)&0.966 &834&1.000&0.203\\
ALL&MEA&BNN-GP-MF&1&0.917	(0.901,0.922)&-&-&1.000&0.203\\
\hline
ALL&SIM&BNN-MGP-MF&1&0.913 (0.910,0.925)&-&-&1.000&0.208\\
ALL&SIM&BNN-MGP-MF&10&0.916 (0.912,0.926)&0.969&833&1.000&0.208\\
ALL&MEA&BNN-MGP-MF&1&0.921 (0.914,0.926)&-&-&1.000&0.208\\
\hline
SIM&SIM&BNN-GP-CMF&1&0.879 (0.706,0.906)&-&-&0.509&0.103\\
SIM&SIM&BNN-GP-CMF&10&0.922 (0.918,0.930)&0.965 &187&1.000&0.103\\
ALL&MEA&BNN-GP-CMF&1&0.873 (0.712,0.904)
&-&-&1.000&0.103\\
\hline
SIM&SIM&BNN-HP-MF&1&0.921 (0.915,0.929)&-&-&1.000&0.136\\
SIM&SIM&BNN-HP-MF&10& 0.921 (0.915,0.926)&0.895 &019&1.000&0.136\\
ALL&MEA&BNN-HP-MF&1&0.921 (0.916,0.926)&-&-&1.000&0.136\\
PRN&SIM&BNN-HP-MF&1&0.919 (0.909,0.926)&-&-&0.457&0.136\\
PRN&SIM&BNN-HP-MF&10&0.919 (0.916,0.927)&0.926 &028&0.457&0.136\\
PRN&MEA&BNN-HP-MF&1&0.920 (0.914,0.926)&-&-&0.457&0.136\\
\bottomrule
  \end{tabular}
\end{table}

\begin{table}[!htbp]
 \caption{\small Medians and standard deviations of the average (per layer) marginal inclusion probability (see the text for the definition) for our model for both MNIST and FMNIST data across 10 repeated experiments.}
 \label{t3}
  \small
  \centering
  \begin{tabular}{l@{\hspace{0.4cm}}c@{\hspace{0.4cm}}ccc}
   \toprule
&\multicolumn{1}{l}{\textbf{MNIST data}}&\multicolumn{1}{l}{\textbf{FMNIST data}}&\multicolumn{1}{l}{\textbf{PHONEME data}}\\
\hline
 \multicolumn{2}{l}{\textbf{LBBNN-GP-MF}}&\\\hline

$\rho(\gamma^{(1)}=1|\D)$&0.0844 (0.0835,0.0853)&0.1323 (0.1291,0.1349)& 0.3806 (0.3764,0.3838)\\
$\rho(\gamma^{(2)}=1|\D)$&0.0959 (0.0942,0.0967)&0.1005 (0.0981,0.1020)& 0.3670  (0.3641,0.3699) \\
$\rho(\gamma^{(3)}=1|\D)$&0.2945 (0.2808,0.3056)&0.2790 (0.2709,0.2921)& 0.4236   (0.4053,0.4367)\\
\hline
 \multicolumn{2}{l}{\textbf{LBBNN-GP-MVN}}&\\\hline

$\rho(\gamma^{(1)}=1|\D)$&0.2975 (0.2928,0.2993)&0.2461 (0.2410,0.2515)&0.3201 (0.3142,0.3273)\\
$\rho(\gamma^{(2)}=1|\D)$&0.0368 (0.0363,0.0377)&0.0392 (0.0383,0.0398)&0.2287 (0.2235,0.2355)\\
$\rho(\gamma^{(3)}=1|\D)$&0.1394 (0.1311,0.1475)&0.1462 (0.1368,0.1521)&0.2763 (0.2632,0.2953)\\
\hline
 \multicolumn{2}{l}{\textbf{LBBNN-GP-LFMVN}}&\\\hline

$\rho(\gamma^{(1)}=1|\D)$&0.4474 (0.4448,0.4498)&0.4589 (0.4565,0.4603)&0.4973 (0.4965,0.4987)\\
$\rho(\gamma^{(2)}=1|\D)$&0.4525 (0.4501,0.4537)&0.4516 (0.4493,0.4528)&0.4972 (0.4952,0.4990)\\
$\rho(\gamma^{(3)}=1|\D)$&0.4815 (0.4685,0.4871)&0.4805 (0.4654,0.4868)&0.4979 (0.4925,0.5048)\\
\hline
\bottomrule
  \end{tabular}

\end{table}

\begin{figure}[!htbp]
\begin{minipage}[t]{1.0\linewidth}
\begin{minipage}[t]{0.499\linewidth}
\centering
MNIST

\includegraphics[trim={0.1cm 0.5cm 1.5cm 1.2cm},clip, width=1\linewidth]{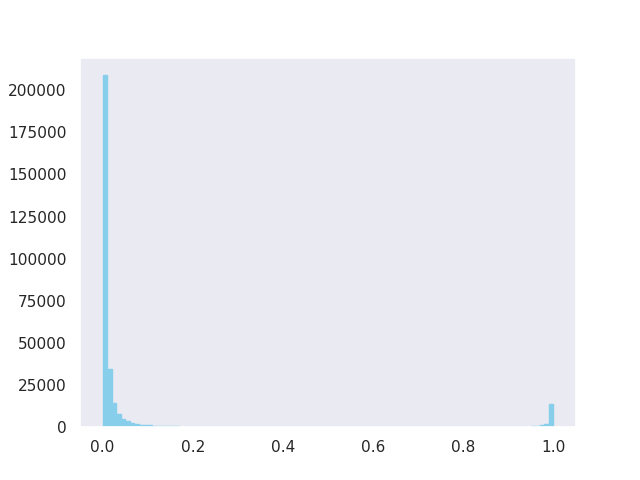}
\includegraphics[trim={0.1cm 0.5cm 1.5cm 1.2cm},clip,width=1\linewidth]{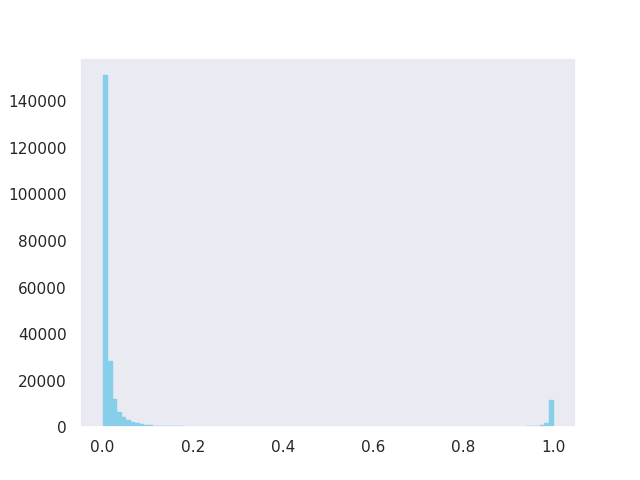}
\includegraphics[trim={0.1cm 0.5cm 1.5cm 1.2cm},clip,width=1\linewidth]{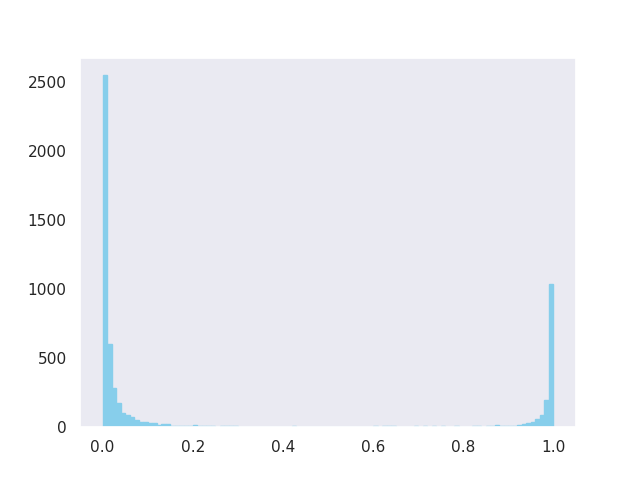}

\end{minipage}
\begin{minipage}[t]{0.499\linewidth}
\centering
FMNIST

\includegraphics[trim={0.1cm 0.5cm 1.5cm 1.2cm},clip,width=1\linewidth]{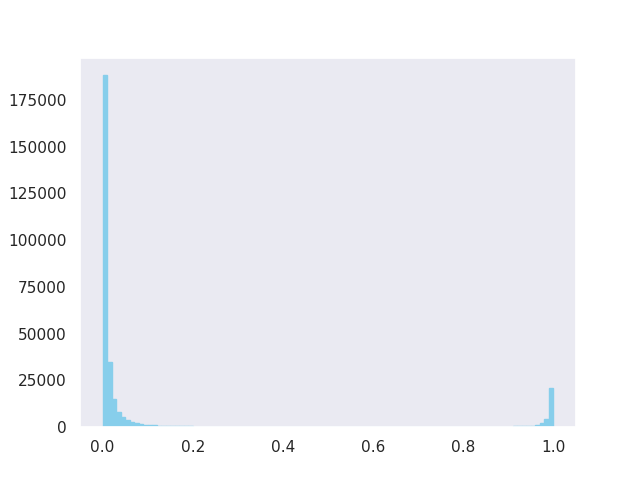}
\includegraphics[trim={0.1cm 0.5cm 1.5cm 1.2cm},clip,width=1\linewidth]{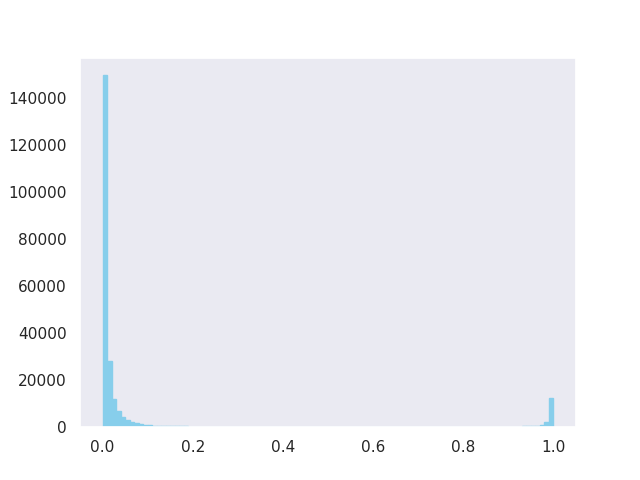}
\includegraphics[trim={0.1cm 0.5cm 1.5cm 1.2cm},clip,width=1\linewidth]{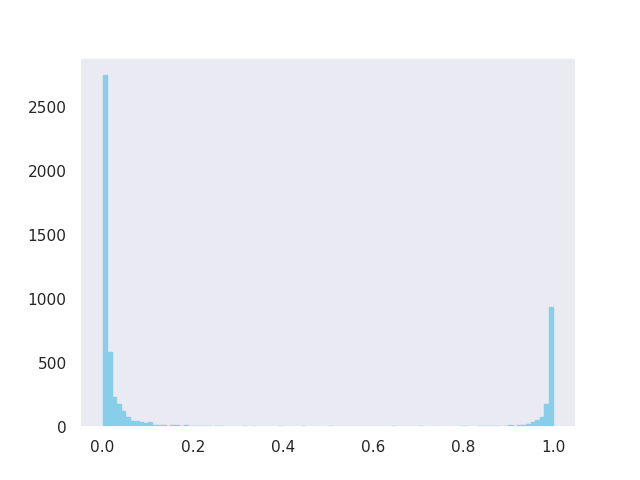}

\end{minipage}
\end{minipage}
\caption{An illustration of histograms of the marginal inclusion probabilities of the weights for the three layers (from top to bottom) of LBBNN-GP-MF from simulation $s=10$ for MNIST (left) and FMNIST (right).}\label{Fig:hist1}
\end{figure}

\paragraph{Out-of-domain experiments}
Following the example of measuring the in and out-of-domain uncertainty suggested in \citet{blogwu}, we will first look at the ability of the LBBNN-GP-MF approach to give confidence in its predictions by means of trying to classify a sample from FMNIST images with samples from the posterior predictive distribution based on the joint posterior of models and parameters trained on MNIST dataset and compare this to the results for a sample of images from the test set of MNIST data. The results are reported for the joint posterior (of models and parameters) obtained in experiment run $s = 10$. As can be seen in Figure~\ref{Fig:uncer}, the samples from LBBNN-GP-MF give highly confident predictions for the MNIST dataset with almost no variance in the samples from the posterior predictive distribution. At the same time, the out-of-domain uncertainty, related to the samples from the posterior predictive distribution based on FMNIST data, is typically high (with some exceptions) showing low confidence of the samples from the posterior predictive distribution in this case. The reversed example of inference on FMNIST and uncertainty related to MNIST data, illustrated in Figure~\ref{Fig:funcer}, leads to the same conclusions. More or less identical results were obtained for the LBBNN-GP-MVN and LBBNN-GP-LFMVN approaches, but they are not reported in the paper due to space constraints.

\begin{figure}[!htbp]
\centering
\includegraphics[trim={3.5cm 0.7cm 3.2cm 1.3cm},clip,width=0.49\linewidth]{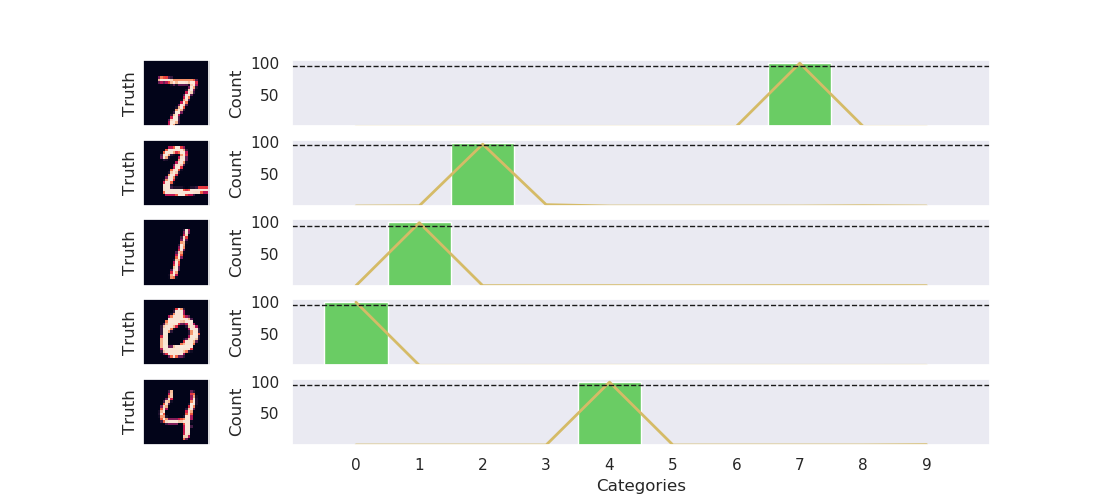}
\includegraphics[trim={3.5cm 0.7cm 3.2cm 1.3cm},clip,width=0.49\linewidth]{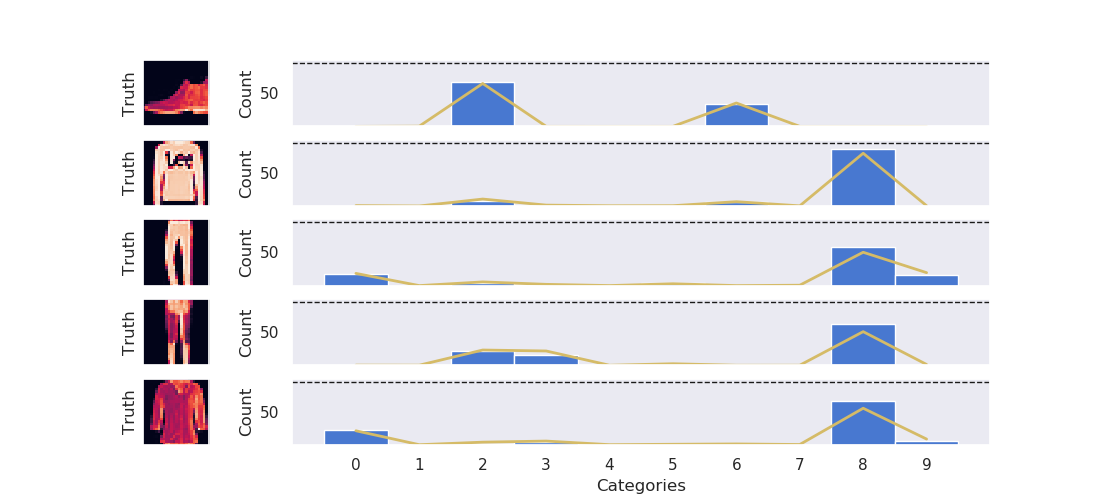}
\caption{Uncertainty related to the in-domain test data (MNIST, left) and out-of-domain test data (FMNIST, right) based on 100 samples from the posterior predictive distribution.  Yellow lines are model-averaged posterior class probabilities (in percent).
Green bars mark the correct classes, blue bars for other samples (with heights corresponding to an alternative estimate of class probabilities using hard classification within each of the replicates in the prediction procedure. The dashed black lines give the 95\% threshold for making decisions with doubt possibilities).
The original images are depicted to the left.}\label{Fig:uncer}
\end{figure}
\begin{figure}[!htbp]
\centering
\includegraphics[trim={3.5cm 0.7cm 3.2cm 1.3cm},clip,width=0.49\linewidth]{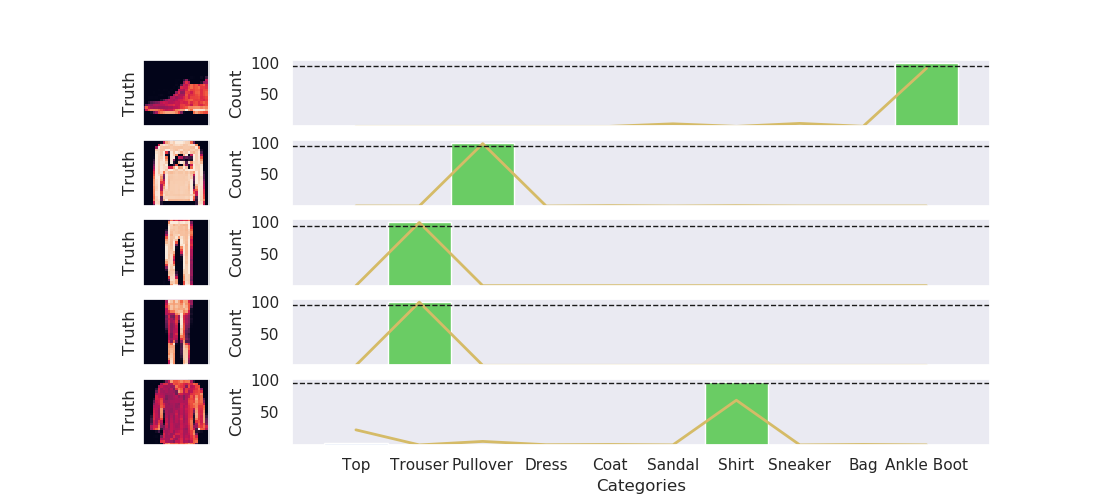}
\includegraphics[trim={3.5cm 0.7cm 3.2cm 1.3cm},clip,width=0.49\linewidth]{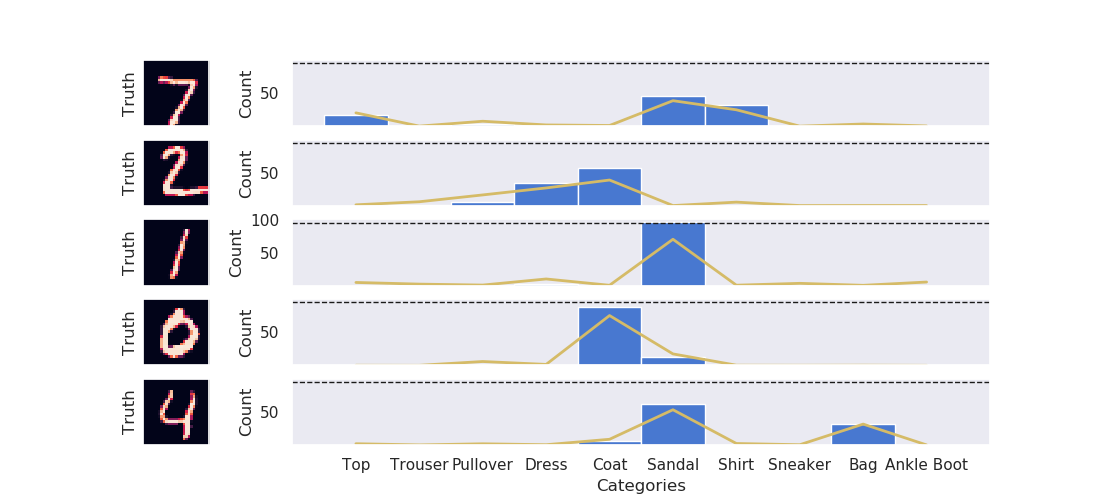}
\caption{Uncertainty related to the in-domain test data (FMNIST, left) and out-of-domain test data (MNIST, right) based on 100 samples from the posterior predictive distribution. See Figure~\ref{Fig:uncer} for additional details.}\label{Fig:funcer}
\end{figure}

Figure~\ref{Fig:entrop} shows the results on more detailed out-of-domain experiments using FMNIST data for the models trained on MNIST data and vice versa. Following~\citet{louizos2017multiplicative}, the goal now is to obtain as inconclusive results as possible (reaching ideally a uniform distribution across classes), corresponding to a large entropy. The plot shows the empirical cumulative distribution function  (CDF) of the entropies over the classified samples, where the ideal is a CDF close to the lower right corner.  
Concrete drop-out is overconfident with a distribution of test classes being far from uniform, the horseshoe prior-based approach (both before and after pruning) is the closest to uniform (but it was also closer to uniform for the in-domain predictions), whilst the 2 other baselines are in between, our approaches (both before pruning and after pruning with the median probability model) are on par with them showing that they handle out-of-domain uncertainty rather well.

\begin{figure}[!t]

\includegraphics[trim={0.2cm 0.2cm 0.3cm 0.3cm},clip, width=0.58\linewidth]{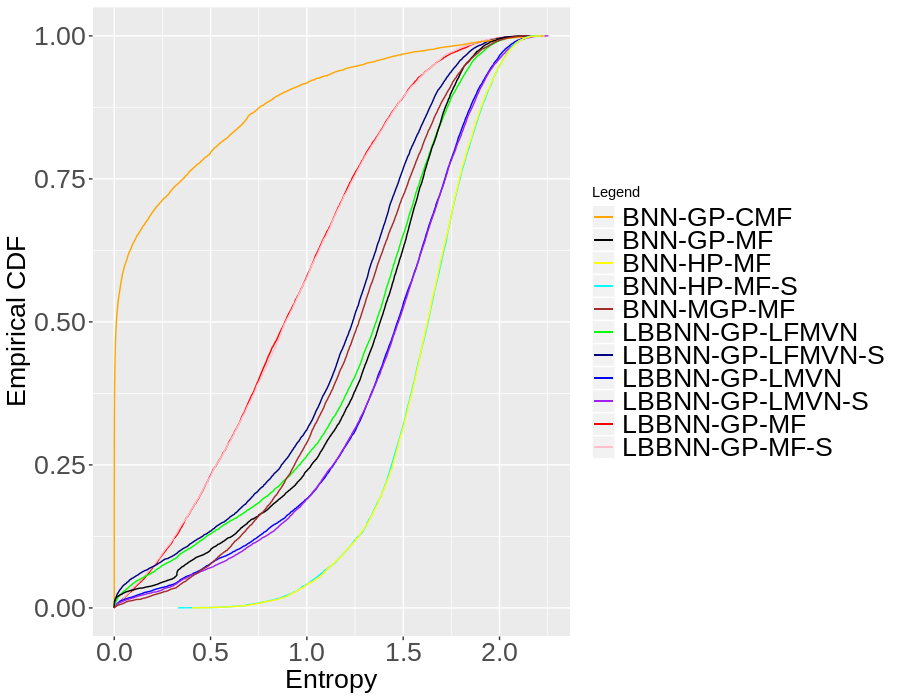}
\includegraphics[trim={0.2cm 0.2cm 8.3cm 0.3cm},clip,width=0.38\linewidth]{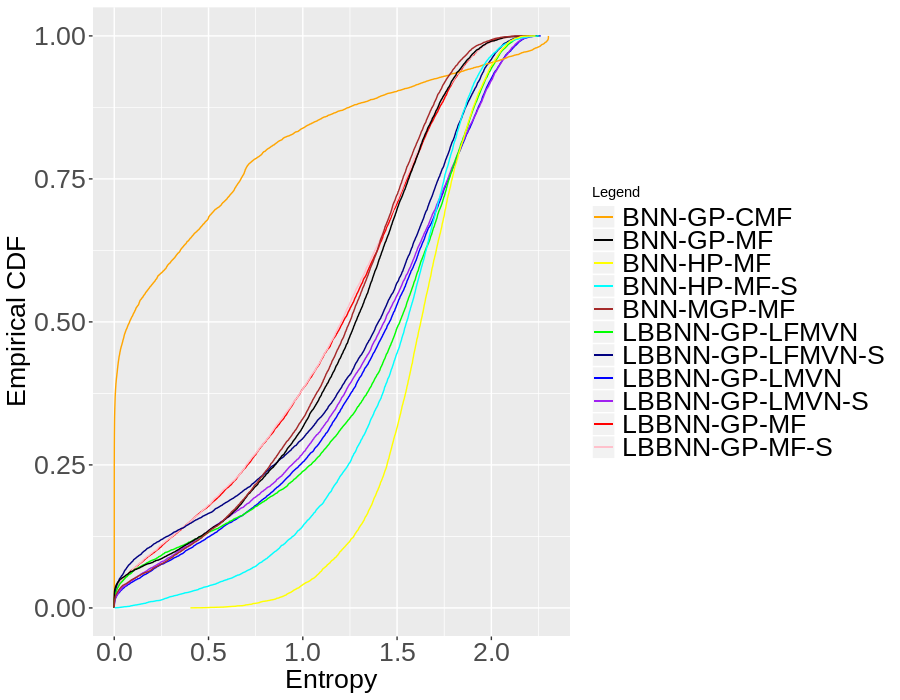}
\caption{Empirical CDF for the entropy of the marginal posterior predictive distributions trained on MNIST and applied to FMNIST (left) and vice versa (right) for simulation $s=10$. Postfix S indicates the model sparsified by an appropriate method. }\label{Fig:entrop}
\end{figure}

\paragraph{More on misclassification uncertainties}\label{ap:misclas}

Figure~\ref{Fig:missclass} shows the misclassification uncertainties associated with posterior predictive sampling. One can see that for the majority of the cases when the LBBNN-GP-MF makes a misclassification, the class certainty of the predictions is relatively low, indicating that the network is unsure. Moreover, even in these cases, the truth is typically within the 95\% credible interval of the predictions, which following \citet{posch2019variational} can be read from whether less than 95 out of 100 samples belong to a wrong class and at least 6 out of 100 samples belong to the right one. Also notice that in many of the cases of misclassification illustrated here, even a human would have serious doubts about making a decision. Here, again, very similar results were obtained for LBBNN-GP-MVN and LBBNN-GP-LFMVN approaches, but they are not reported in the paper due to space constraints.

\begin{figure}[!htbp]
\centering
\includegraphics[trim={3.5cm 3cm 3.2cm 3.5cm},clip,width=0.49\linewidth]{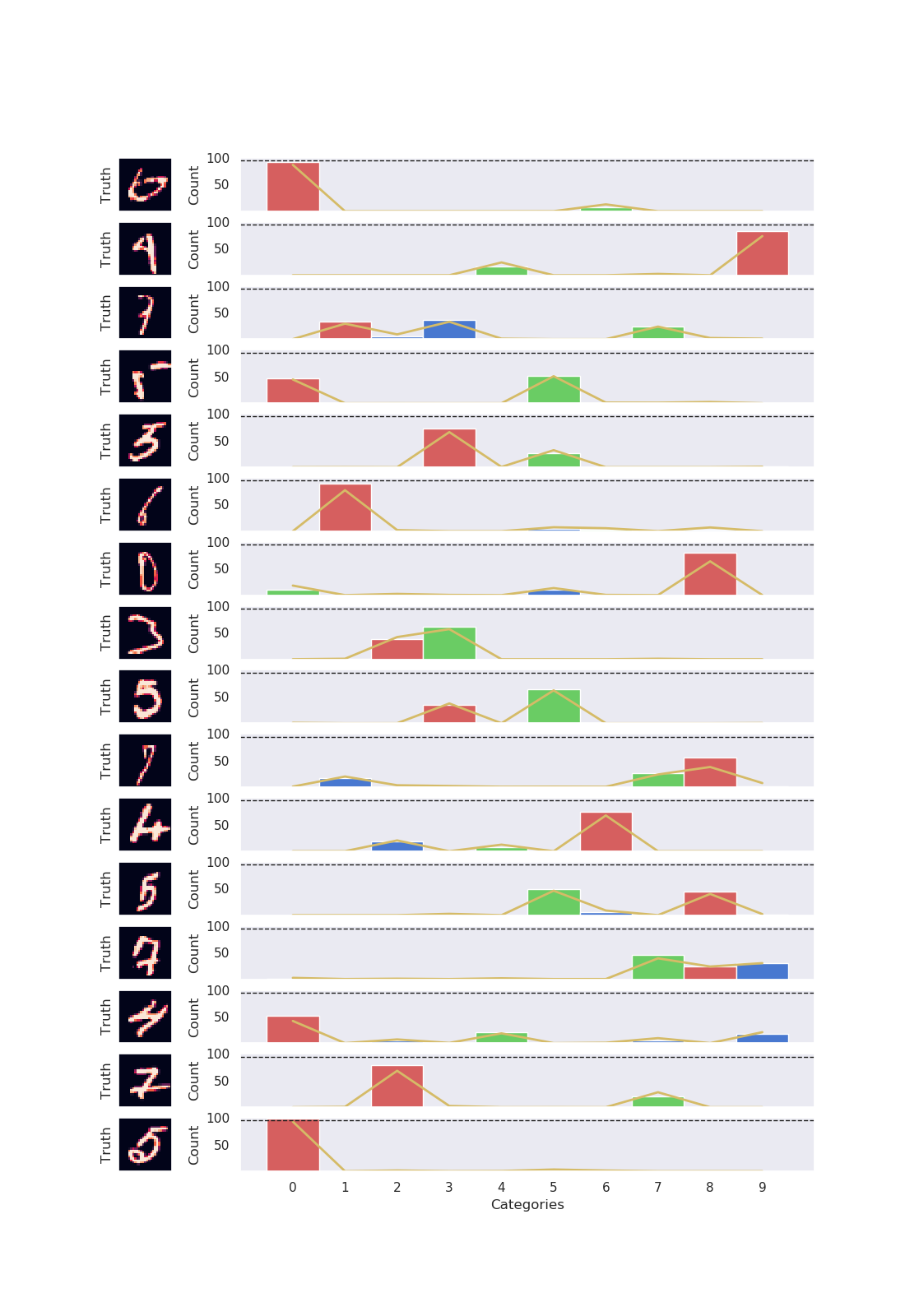}
\includegraphics[trim={3.5cm 3cm 3.2cm 3.5cm},clip,width=0.49\linewidth]{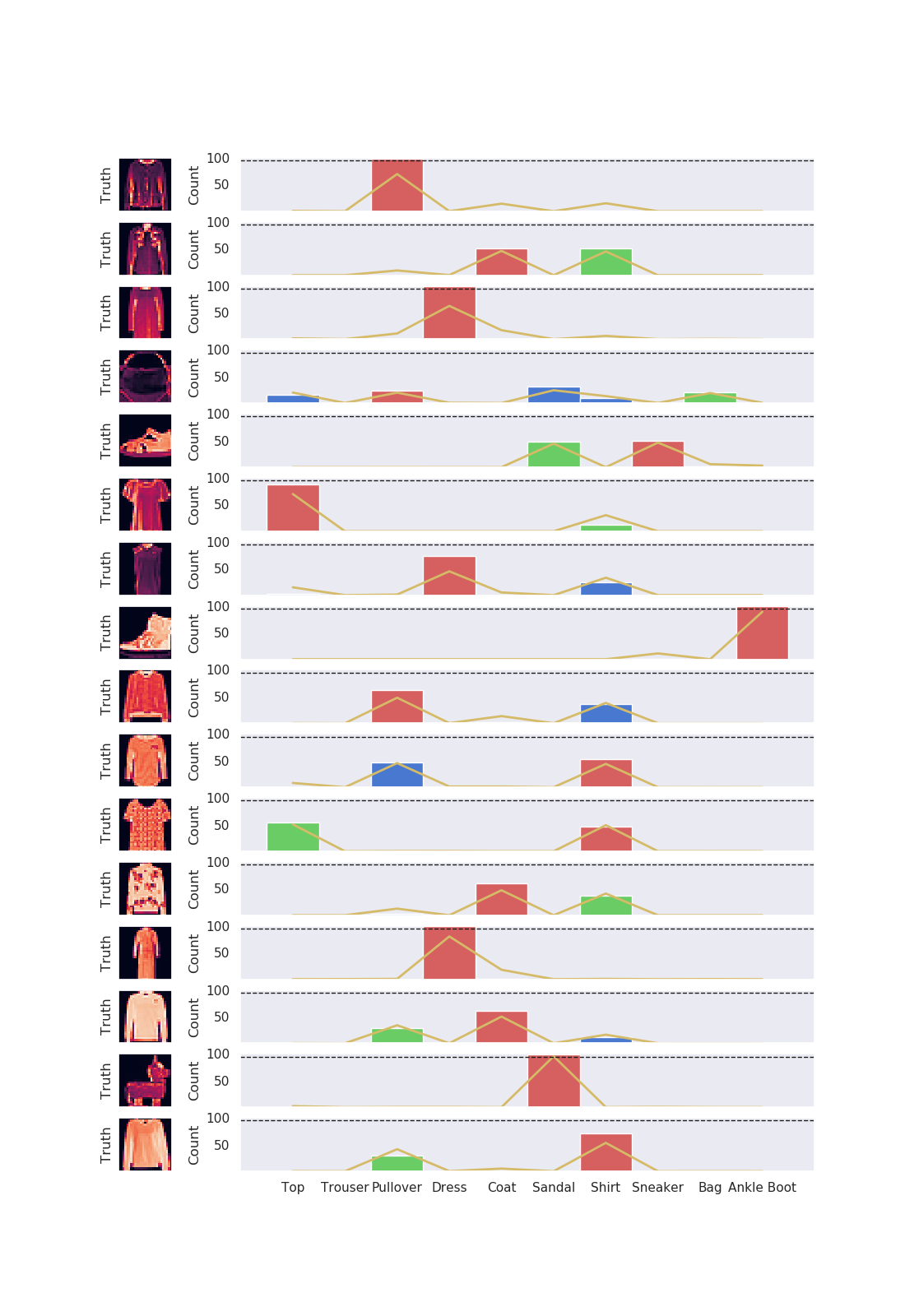}
\caption{Uncertainty based on the samples from the LBBNN-GP-MF model from the joint posterior (from simulation $s=10$) for 16 potentially wrongly  classified (under model averaging) images for MNIST data (left) and FMNIST data (right). 
 Yellow lines are model-averaged posterior class probabilities based on the Full BNN approach (in percent). 
Here, green bars are the true classes and red - the incorrectly predicted, blue bars - other samples, dashed black lines indicate the
95\% threshold for making a decision when a doubt possibility is included. The original images are depicted to the left.}\label{Fig:missclass}
\end{figure}

\section{Discussion}\label{section5}

In this paper, we have introduced the concept of Bayesian model (or structural) uncertainty in BNNs and suggested a scalable variational inference technique for approximating the joint posterior of models and the parameters of these models. Approximate posterior predictive distributions, with both models and parameters marginalized out, can be easily obtained. Furthermore, marginal inclusion probabilities give proper probabilistic interpretation to Bayesian binary dropout and allow to perform model (or architecture) selection. This comes at the price of having only one additional parameter per weight included. 

We provide image and sound classification applications of the suggested technique showing that it both allows to significantly sparsify neural networks without noticeable loss of predictive power and accurately handle the predictive uncertainty. Regarding the computational costs of optimization: For the mean-field approximations, we are introducing only one additional parameter $\alpha_{kj}^{l}$ for each weight. With underlying Gaussian structure on $\boldsymbol\alpha^{(l)}$, additional parameters of the covariance matrix are further introduced. The complexity of each optimization step is proportional to the number of parameters to optimize, thus the deterioration in terms of computational time (as demonstrated in the experiments) is not at all drastic as compared to the fully connected BNN or even FNN. For the obtained predictions the complexities for different
methods are proportional to the number of "active" parameters involved in predictions giving typically benefits to more sparse methods.  

Regarding practical recommendations, we suggest, based on our empirical results, to use LBBNN-GP-MF if one is interested in a reasonable trade-off between sparsity, predictive accuracy, and uncertainty as well as computational costs. If sparsity is not needed standard BNN-GP-MF and BNN-MGP-MF are sufficient. 

Currently, fairly simple prior distributions for both models and parameters are used. These prior distributions are assumed independent across the parameters of the neural network, which might not always be reasonable. Alternatively, both parameter and model priors can incorporate joint-dependent structures, which can further improve the sparsification of the configurations of neural networks. When it comes to the model priors with local structures and dependencies between the variables (neurons), one can mention the so-called dilution priors \citep{george2010dilution}. These priors take care of the similarities between models by down-weighting the probabilities of the models with highly correlated variables.
There are also numerous approaches to incorporate interdependencies between the model parameters via priors in different settings within simpler models \citep{smith2004bayesian, fahrmeir2001bayesian, dobra2004sparse}. Obviously, in the context of inference in the joint parameter-model settings in BNNs, more research should be done on the choice of priors. Specifically, for image analysis, it might be of interest to develop convolution-inducing priors, whilst for recurrent models, one can think of exponentially decaying parameter priors for controlling the short-long memory.

In this work, we restrict ourselves to a subclass of BNNs, defined by the inclusion-exclusion of particular weights within a given architecture. In the future, it can be of particular interest to extend the approach to the choice of the activation functions as well as the maximal depth and width of each layer of the BNN. A more detailed discussion of these possibilities and ways to proceed is given in \citet{hubin2018thesis}. 
Finally, studies of the accuracy of variational inference within these complex nonlinear models should be performed. Even within linear models, \citet{carbonetto2012scalable} have shown that the results can be strongly biased. Various approaches for reducing the bias in variational inference are developed. One can either use more flexible families of variational distributions by for example introducing auxiliary variables \citep{ranganath2016hierarchical, salimans2015markov}, normalizing flows \citep{louizos2017multiplicative}, or address Jackknife to remove the bias \citep{nowozin2018debiasing}. We leave these opportunities for further research.

The approach suggested in this paper can only implicitly perform width-selection of the architectures: Our approach allows to select the width under specific activation functions like ReLU as for a neuron with all weights switched off, i.e.  if all $\bm \gamma$'s of a given neuron are put to 0, the unit is excluded from a layer thus reducing the width. The paper does not address the depth selection of neural networks, but a similar implicit approach would be possible for the depth  as long as architectures with skip-connections are allowed (i.e. all layers are connected to the responses as well as the next layers of the networks). In such a case, it would be possible to have all nodes excluded in a specific layer of depth $k$ making only the lower depth layers influence the responses. Then, the depth uncertainty could be inferred. Addressing the depth selection, thus, is an interesting possibility for follow-up research. Also, addressing depth and width selection more explicitly through assigning targeted priors as in \citet{hubin2022flexible} could be of interest in the future. But the ideas from \citet{hubin2022flexible}  would impose more challenges for variational Bayes as compared to MCMC. At the same time, as shown in \citet{hubin2022flexible}, such a procedure is much more likely to provide highly interpretable models. Also, further research on model priors will be needed in case of the explicit width-depth selection.

Last but not least, we would like to discuss some concurrent work that appeared while our paper was in the submission/review process. Firstly, two theoretical papers on posterior consistency of Bayesian variational deep learning in general and sparse contexts were published \citep{bhattacharya2021statistical,cherief2020convergence}.  Also, \citet{bai2020efficient} justified theoretically (in an asymptotic setting) the choice of prior inclusion probabilities for the model and priors from \citet{hubin2018thesis, hubin2019combining}. They used an almost identical variational inference technique as the one proposed in \citet{hubin2019combining}. The approach from \citep{hubin2019combining} further recently found an application in genetic association studies \citep{cheng2022uncertainty}.  Finally, \citet{sun2022learning} used MCMC for inference on the model proposed in the early version of our work \citet{hubin2019combining}. These advancements show how rapidly the field develops and once again emphasize how actual and to date, the methodological developments on BNNs are in the field.

 \subsubsection*{Acknowledgments}
The authors would like to acknowledge Sean Murray (Norwegian Computing Center) for the comments on the language of the article and Dr. Pierre Lison (Norwegian Computing Center) for thoughtful discussions of the literature, potential applications, and technological tools. We also thank Dr. Petter Mostad, Department of Mathematical Sciences,
The Chalmers University of Technology and the University of Gothenburg for valuable comments on Proposition 2. We also acknowledge  constructive comments from the reviews and editorial comments we received at all stages of the publication of this article.

\clearpage
{
\bibliographystyle{ba}
\bibliography{lit}

\begin{thebibliography}{87}
\newcommand{\enquote}[1]{``#1''}
\expandafter\ifx\csname natexlab\endcsname\relax\def\natexlab#1{#1}\fi
\expandafter\ifx\csname url\endcsname\relax
  \def\url#1{{\tt #1}}\fi
\expandafter\ifx\csname urlprefix\endcsname\relax\def\urlprefix{URL }\fi
\ifx\endbibitem\undefined \let\endbibitem\relax\fi

\bibitem[{Adya and Collopy(1998)}]{adya1998ective}
Adya, M. and Collopy, F. (1998).
\newblock \enquote{How effective are neural networks at forecasting and
  prediction? A review and evaluation.}
\newblock {\em J. Forecasting\/}, 17: 481--495.
\endbibitem

\bibitem[{Ahmed et~al.(2012)Ahmed, Aly, Gonzalez, Narayanamurthy, and
  Smola}]{ahmed2012scalable}
Ahmed, A., Aly, M., Gonzalez, J., Narayanamurthy, S., and Smola, A.~J. (2012).
\newblock \enquote{Scalable inference in latent variable models.}
\newblock In {\em Proceedings of the fifth ACM international conference on Web
  search and data mining\/}, 123--132. ACM.
\endbibitem

\bibitem[{Attias(2000)}]{attias2000variational}
Attias, H. (2000).
\newblock \enquote{{A variational Baysian framework for graphical models}.}
\newblock In {\em Advances in neural information processing systems\/},
  209--215.
\endbibitem

\bibitem[{Bai et~al.(2020)Bai, Song, and Cheng}]{bai2020efficient}
Bai, J., Song, Q., and Cheng, G. (2020).
\newblock \enquote{Efficient variational inference for sparse deep learning
  with theoretical guarantee.}
\newblock {\em Advances in Neural Information Processing Systems\/}, 33:
  466--476.
\endbibitem

\bibitem[{Barbieri et~al.(2021)Barbieri, Berger, George, and Ro{\v
  c}kov{\'a}}]{barbieri2018median}
Barbieri, M.~M., Berger, J.~O., George, E.~I., and Ro{\v c}kov{\'a}, V. (2021).
\newblock \enquote{{The Median Probability Model and Correlated Variables}.}
\newblock {\em Bayesian Analysis\/}, 1 -- 28.
\endbibitem

\bibitem[{Barbieri et~al.(2004)Barbieri, Berger et~al.}]{barbieri2004optimal}
Barbieri, M.~M., Berger, J.~O., et~al. (2004).
\newblock \enquote{Optimal predictive model selection.}
\newblock {\em The annals of statistics\/}, 32(3): 870--897.
\endbibitem

\bibitem[{Bardenet et~al.(2014)Bardenet, Doucet, and
  Holmes}]{bardenet2014towards}
Bardenet, R., Doucet, A., and Holmes, C. (2014).
\newblock \enquote{{Towards scaling up Markov chain Monte Carlo: an adaptive
  subsampling approach}.}
\newblock In {\em International Conference on Machine Learning\/}, 405--413.
\endbibitem

\bibitem[{Bardenet et~al.(2017)Bardenet, Doucet, and
  Holmes}]{bardenet2017markov}
--- (2017).
\newblock \enquote{{On Markov chain Monte Carlo methods for tall data}.}
\newblock {\em The Journal of Machine Learning Research\/}, 18(1): 1515--1557.
\endbibitem

\bibitem[{Bhattacharya and Maiti(2021)}]{bhattacharya2021statistical}
Bhattacharya, S. and Maiti, T. (2021).
\newblock \enquote{Statistical foundation of Variational Bayes neural
  networks.}
\newblock {\em Neural Networks\/}, 137: 151--173.
\endbibitem

\bibitem[{Bishop(1997)}]{bishop1997bayesian}
Bishop, C.~M. (1997).
\newblock \enquote{Bayesian neural networks.}
\newblock {\em Journal of the Brazilian Computer Society\/}, 4(1).
\endbibitem

\bibitem[{Blalock et~al.(2020)Blalock, Gonzalez~Ortiz, Frankle, and
  Guttag}]{blalock2020state}
Blalock, D., Gonzalez~Ortiz, J.~J., Frankle, J., and Guttag, J. (2020).
\newblock \enquote{What is the state of neural network pruning?}
\newblock {\em Proceedings of machine learning and systems\/}, 2: 129--146.
\endbibitem

\bibitem[{Blei et~al.(2017)Blei, Kucukelbir, and
  McAuliffe}]{blei2017variational}
Blei, D.~M., Kucukelbir, A., and McAuliffe, J.~D. (2017).
\newblock \enquote{Variational inference: A review for statisticians.}
\newblock {\em Journal of the American statistical Association\/}, 112(518):
  859--877.
\endbibitem

\bibitem[{Blundell et~al.(2015)Blundell, Cornebise, Kavukcuoglu, and
  Wierstra}]{blundell2015weight}
Blundell, C., Cornebise, J., Kavukcuoglu, K., and Wierstra, D. (2015).
\newblock \enquote{Weight uncertainty in neural networks.}
\newblock {\em arXiv preprint arXiv:1505.05424\/}.
\endbibitem

\bibitem[{Bornkamp et~al.(2017)Bornkamp, Ohlssen, Magnusson, and
  Schmidli}]{bornkamp2017model}
Bornkamp, B., Ohlssen, D., Magnusson, B.~P., and Schmidli, H. (2017).
\newblock \enquote{Model averaging for treatment effect estimation in
  subgroups.}
\newblock {\em Pharmaceutical statistics\/}, 16(2): 133--142.
\endbibitem

\bibitem[{Breiman(2001)}]{breiman2001statistical}
Breiman, L. (2001).
\newblock \enquote{Statistical modeling: The two cultures (with comments and a
  rejoinder by the author).}
\newblock {\em Statistical science\/}, 16(3): 199--231.
\endbibitem

\bibitem[{Carbonetto et~al.(2012)Carbonetto, Stephens
  et~al.}]{carbonetto2012scalable}
Carbonetto, P., Stephens, M., et~al. (2012).
\newblock \enquote{{Scalable variational inference for Bayesian variable
  selection in regression, and its accuracy in genetic association studies}.}
\newblock {\em Bayesian analysis\/}, 7(1): 73--108.
\endbibitem

\bibitem[{Carvalho et~al.(2009)Carvalho, Polson, and
  Scott}]{carvalho2009handling}
Carvalho, C.~M., Polson, N.~G., and Scott, J.~G. (2009).
\newblock \enquote{Handling sparsity via the horseshoe.}
\newblock In {\em Artificial Intelligence and Statistics\/}, 73--80.
\endbibitem

\bibitem[{Cheng et~al.(2022)Cheng, Ramachandran, and
  Crawford}]{cheng2022uncertainty}
Cheng, W., Ramachandran, S., and Crawford, L. (2022).
\newblock \enquote{Uncertainty Quantification in Variable Selection for Genetic
  Fine-Mapping using Bayesian Neural Networks.}
\newblock {\em iScience\/}, 104553.
\endbibitem

\bibitem[{Ch{\'e}rief-Abdellatif(2020)}]{cherief2020convergence}
Ch{\'e}rief-Abdellatif, B.-E. (2020).
\newblock \enquote{Convergence rates of variational inference in sparse deep
  learning.}
\newblock In {\em International Conference on Machine Learning\/}, 1831--1842.
  PMLR.
\endbibitem

\bibitem[{Claeskens and Hjort(2008)}]{claeskens_hjort_2008}
Claeskens, G. and Hjort, N.~L. (2008).
\newblock {\em Model Selection and Model Averaging\/}.
\newblock Cambridge Series in Statistical and Probabilistic Mathematics.
  Cambridge University Press.
\endbibitem

\bibitem[{Claeskens et~al.(2008)Claeskens, Hjort et~al.}]{claeskens2008model}
Claeskens, G., Hjort, N.~L., et~al. (2008).
\newblock \enquote{Model selection and model averaging.}
\newblock {\em Cambridge Books\/}.
\endbibitem

\bibitem[{Clyde et~al.(2011)Clyde, Ghosh, and
  Littman}]{Clyde:Ghosh:Littman:2010}
Clyde, M.~A., Ghosh, J., and Littman, M.~L. (2011).
\newblock \enquote{Bayesian Adaptive Sampling for Variable Selection and Model
  Averaging.}
\newblock {\em Journal of Computational and Graphical Statistics\/}, 20(1):
  80--101.
\endbibitem

\bibitem[{Dey et~al.(2000)Dey, Ghosh, and Mallick}]{dey2000generalized}
Dey, D.~K., Ghosh, S.~K., and Mallick, B.~K. (2000).
\newblock {\em Generalized linear models: A Bayesian perspective\/}.
\newblock CRC Press.
\endbibitem

\bibitem[{Dobra et~al.(2004)Dobra, Hans, Jones, Nevins, Yao, and
  West}]{dobra2004sparse}
Dobra, A., Hans, C., Jones, B., Nevins, J.~R., Yao, G., and West, M. (2004).
\newblock \enquote{Sparse graphical models for exploring gene expression data.}
\newblock {\em Journal of Multivariate Analysis\/}, 90(1): 196--212.
\endbibitem

\bibitem[{Fahrmeir and Lang(2001)}]{fahrmeir2001bayesian}
Fahrmeir, L. and Lang, S. (2001).
\newblock \enquote{{Bayesian inference for generalized additive mixed models
  based on Markov random field priors}.}
\newblock {\em Journal of the Royal Statistical Society: Series C (Applied
  Statistics)\/}, 50(2): 201--220.
\endbibitem

\bibitem[{Frommlet et~al.(2012)Frommlet, Ljubic, Arnardóttir~Helga, and
  Bogdan}]{Frommlet2012}
Frommlet, F., Ljubic, I., Arnardóttir~Helga, B., and Bogdan, M. (2012).
\newblock \enquote{{QTL Mapping Using a Memetic Algorithm with Modifications of
  {BIC} as Fitness Function}.}
\newblock {\em Statistical Applications in Genetics and Molecular Biology\/},
  11(4): 1--26.
\endbibitem

\bibitem[{Gal(2016)}]{Gal2016Uncertainty}
Gal, Y. (2016).
\newblock \enquote{Uncertainty in Deep Learning.}
\newblock Ph.D. thesis, University of Cambridge.
\endbibitem

\bibitem[{Gal et~al.(2017)Gal, Hron, and Kendall}]{gal2017concrete}
Gal, Y., Hron, J., and Kendall, A. (2017).
\newblock \enquote{Concrete dropout.}
\newblock In {\em Advances in Neural Information Processing Systems\/},
  3581--3590.
\endbibitem

\bibitem[{Geisser and Eddy(1979)}]{geisser1979predictive}
Geisser, S. and Eddy, W.~F. (1979).
\newblock \enquote{A predictive approach to model selection.}
\newblock {\em Journal of the American Statistical Association\/}, 74(365):
  153--160.
\endbibitem

\bibitem[{George and McCulloch(1993)}]{george1993variable}
George, E.~I. and McCulloch, R.~E. (1993).
\newblock \enquote{{Variable selection via Gibbs sampling}.}
\newblock {\em Journal of the American Statistical Association\/}, 88(423):
  881--889.
\endbibitem

\bibitem[{George et~al.(2010)}]{george2010dilution}
George, E.~I. et~al. (2010).
\newblock \enquote{Dilution priors: Compensating for model space redundancy.}
\newblock In {\em Borrowing Strength: Theory Powering Applications--A
  Festschrift for Lawrence D. Brown\/}, 158--165. Institute of Mathematical
  Statistics.
\endbibitem

\bibitem[{Ghosh(2015)}]{ghosh2015bayesian}
Ghosh, J. (2015).
\newblock \enquote{Bayesian model selection using the median probability
  model.}
\newblock {\em Wiley Interdisciplinary Reviews: Computational Statistics\/},
  7(3): 185--193.
\endbibitem

\bibitem[{Ghosh et~al.(2018)Ghosh, Yao, and Doshi-Velez}]{ghosh2018structured}
Ghosh, S., Yao, J., and Doshi-Velez, F. (2018).
\newblock \enquote{Structured variational learning of Bayesian neural networks
  with horseshoe priors.}
\newblock In {\em International Conference on Machine Learning\/}, 1744--1753.
  PMLR.
\endbibitem

\bibitem[{Ghosh et~al.(2019)Ghosh, Yao, and Doshi-Velez}]{ghosh2019model}
--- (2019).
\newblock \enquote{Model Selection in Bayesian Neural Networks via Horseshoe
  Priors.}
\newblock {\em J. Mach. Learn. Res.\/}, 20(182): 1--46.
\endbibitem

\bibitem[{Goodfellow et~al.(2016)Goodfellow, Bengio, and
  Courville}]{Goodfellow-et-al-2016}
Goodfellow, I., Bengio, Y., and Courville, A. (2016).
\newblock {\em Deep Learning\/}.
\newblock MIT Press.
\newblock \url{http://www.deeplearningbook.org}.
\endbibitem

\bibitem[{Graves(2011)}]{graves2011practical}
Graves, A. (2011).
\newblock \enquote{Practical variational inference for neural networks.}
\newblock In {\em Advances in Neural Information Processing Systems\/},
  2348--2356.
\endbibitem

\bibitem[{Hansen and Yu(2001)}]{hansen2001model}
Hansen, M.~H. and Yu, B. (2001).
\newblock \enquote{Model selection and the principle of minimum description
  length.}
\newblock {\em Journal of the American Statistical Association\/}, 96(454):
  746--774.
\endbibitem

\bibitem[{Hastie et~al.(1995)Hastie, Buja, and
  Tibshirani}]{hastie1995penalized}
Hastie, T., Buja, A., and Tibshirani, R. (1995).
\newblock \enquote{Penalized discriminant analysis.}
\newblock {\em The Annals of Statistics\/}, 73--102.
\endbibitem

\bibitem[{Heinze et~al.(2018)Heinze, Wallisch, and
  Dunkler}]{heinze2018variable}
Heinze, G., Wallisch, C., and Dunkler, D. (2018).
\newblock \enquote{Variable selection--a review and recommendations for the
  practicing statistician.}
\newblock {\em Biometrical journal\/}, 60(3): 431--449.
\endbibitem

\bibitem[{Hern{\'a}ndez-Lobato et~al.(2015)Hern{\'a}ndez-Lobato,
  Hern{\'a}ndez-Lobato, and Su{\'a}rez}]{Hernandez2015}
Hern{\'a}ndez-Lobato, J.~M., Hern{\'a}ndez-Lobato, D., and Su{\'a}rez, A.
  (2015).
\newblock \enquote{Expectation propagation in linear regression models with
  spike-and-slab priors.}
\newblock {\em Machine Learning\/}, 99(3): 437--487.
\endbibitem

\bibitem[{Hubin(2018)}]{hubin2018thesis}
Hubin, A. (2018).
\newblock \enquote{Bayesian model configuration, selection and averaging in
  complex regression contexts.}
\newblock Ph.D. thesis, University of Oslo.
\newline\urlprefix\url{https://www.duo.uio.no/handle/10852/65638}
\endbibitem

\bibitem[{Hubin and Storvik(2018)}]{hubin2018mode}
Hubin, A. and Storvik, G. (2018).
\newblock \enquote{{Mode jumping MCMC for Bayesian variable selection in
  GLMM}.}
\newblock {\em Computational Statistics \& Data Analysis\/}.
\endbibitem

\bibitem[{Hubin and Storvik(2019)}]{hubin2019combining}
--- (2019).
\newblock \enquote{Combining model and parameter uncertainty in Bayesian neural
  networks.}
\newblock {\em arXiv preprint arXiv:1903.07594\/}.
\endbibitem

\bibitem[{Hubin et~al.(2021)Hubin, Storvik, and Frommlet}]{hubin2022flexible}
Hubin, A., Storvik, G., and Frommlet, F. (2021).
\newblock \enquote{Flexible Bayesian Nonlinear Model Configuration.}
\newblock {\em Journal of Artificial Intelligence Research\/}, 72: 901--942.
\endbibitem

\bibitem[{Hubin et~al.(2020)Hubin, Storvik, Frommlet et~al.}]{hubin2020novel}
Hubin, A., Storvik, G., Frommlet, F., et~al. (2020).
\newblock \enquote{A Novel Algorithmic Approach to Bayesian Logic Regression
  (with Discussion).}
\newblock {\em Bayesian Analysis\/}, 15(1): 263--333.
\endbibitem

\bibitem[{Humphreys and Titterington(2000)}]{humphreys2000approximate}
Humphreys, K. and Titterington, D. (2000).
\newblock \enquote{{Approximate Bayesian inference for simple mixtures}.}
\newblock In {\em Proc. Computational Statistics 2000\/}, 331--336.
\endbibitem

\bibitem[{Jordan et~al.(1999)Jordan, Ghahramani, Jaakkola, and
  Saul}]{jordan1999introduction}
Jordan, M.~I., Ghahramani, Z., Jaakkola, T.~S., and Saul, L.~K. (1999).
\newblock \enquote{An introduction to variational methods for graphical
  models.}
\newblock {\em Machine learning\/}, 37(2): 183--233.
\endbibitem

\bibitem[{Jyl\"{a}nki et~al.(2014)Jyl\"{a}nki, Nummenmaa, and
  Vehtari}]{JMLR:v15:jylanki14a}
Jyl\"{a}nki, P., Nummenmaa, A., and Vehtari, A. (2014).
\newblock \enquote{Expectation Propagation for Neural Networks with
  Sparsity-Promoting Priors.}
\newblock {\em Journal of Machine Learning Research\/}, 15: 1849--1901.
\newline\urlprefix\url{http://jmlr.org/papers/v15/jylanki14a.html}
\endbibitem

\bibitem[{Kanter and Veeramachaneni(2015)}]{kanter2015deep}
Kanter, J.~M. and Veeramachaneni, K. (2015).
\newblock \enquote{Deep feature synthesis: Towards automating data science
  endeavors.}
\newblock In {\em Data Science and Advanced Analytics (DSAA), 2015. 36678 2015.
  IEEE International Conference on\/}, 1--10. IEEE.
\endbibitem

\bibitem[{Kingma and Ba(2014)}]{kingma2014adam}
Kingma, D.~P. and Ba, J. (2014).
\newblock \enquote{Adam: A method for stochastic optimization.}
\newblock {\em arXiv preprint arXiv:1412.6980\/}.
\endbibitem

\bibitem[{Kingma et~al.(2015)Kingma, Salimans, and
  Welling}]{kingma2015variational}
Kingma, D.~P., Salimans, T., and Welling, M. (2015).
\newblock \enquote{Variational dropout and the local reparameterization trick.}
\newblock In {\em Advances in Neural Information Processing Systems\/},
  2575--2583.
\endbibitem

\bibitem[{Korattikara et~al.(2014)Korattikara, Chen, and
  Welling}]{korattikara2014austerity}
Korattikara, A., Chen, Y., and Welling, M. (2014).
\newblock \enquote{{Austerity in MCMC land: Cutting the Metropolis-Hastings
  budget}.}
\newblock In {\em International Conference on Machine Learning\/}, 181--189.
\endbibitem

\bibitem[{Kuo and Mallick(1998)}]{kuo1998variable}
Kuo, L. and Mallick, B. (1998).
\newblock \enquote{Variable selection for regression models.}
\newblock {\em Sankhy{\=a}: The Indian Journal of Statistics, Series B\/},
  65--81.
\endbibitem

\bibitem[{LeCun et~al.(1998)LeCun, Cortes, and Burges}]{lecun1998mnist}
LeCun, Y., Cortes, C., and Burges, C. (1998).
\newblock \enquote{{MNIST handwritten digit database, 1998}.}
\newblock {\em {URL http://www. research. att. com/\~{}} yann/ocr/mnist\/}.
\endbibitem

\bibitem[{Li and Clyde(2018)}]{li2018mixtures}
Li, Y. and Clyde, M.~A. (2018).
\newblock \enquote{Mixtures of g-priors in generalized linear models.}
\newblock {\em Journal of the American Statistical Association\/}, 113(524):
  1828--1845.
\endbibitem

\bibitem[{Liu et~al.(2015)Liu, Mingas, and Bouganis}]{liu2015exact}
Liu, S., Mingas, G., and Bouganis, C.-S. (2015).
\newblock \enquote{{An exact MCMC accelerator under custom precision regimes}.}
\newblock In {\em Field Programmable Technology (FPT), 2015 International
  Conference on\/}, 120--127. IEEE.
\endbibitem

\bibitem[{Logsdon et~al.(2010)Logsdon, Hoffman, and
  Mezey}]{logsdon2010variational}
Logsdon, B.~A., Hoffman, G.~E., and Mezey, J.~G. (2010).
\newblock \enquote{{A variational Bayes algorithm for fast and accurate
  multiple locus genome-wide association analysis}.}
\newblock {\em BMC bioinformatics\/}, 11(1): 58.
\endbibitem

\bibitem[{Louizos et~al.(2017)Louizos, Ullrich, and
  Welling}]{louizos2017bayesian}
Louizos, C., Ullrich, K., and Welling, M. (2017).
\newblock \enquote{Bayesian compression for deep learning.}
\newblock In {\em Advances in Neural Information Processing Systems\/},
  3288--3298.
\endbibitem

\bibitem[{Louizos and Welling(2017)}]{louizos2017multiplicative}
Louizos, C. and Welling, M. (2017).
\newblock \enquote{{Multiplicative normalizing flows for variational Bayesian
  neural networks}.}
\newblock In {\em Proceedings of the 34th International Conference on Machine
  Learning-Volume 70\/}, 2218--2227. JMLR. org.
\endbibitem

\bibitem[{MacKay(1995)}]{mackay1995bayesian}
MacKay, D.~J. (1995).
\newblock \enquote{Bayesian neural networks and density networks.}
\newblock {\em Nuclear Instruments and Methods in Physics Research Section A:
  Accelerators, Spectrometers, Detectors and Associated Equipment\/}, 354(1):
  73--80.
\endbibitem

\bibitem[{MacKay(1997)}]{mackay1997ensemble}
--- (1997).
\newblock \enquote{{Ensemble Learning for Hidden Markov Models}.}
\newblock {\em Unpublished manuscript\/}.
\newline\urlprefix\url{https://citeseerx.ist.psu.edu/viewdoc/download?doi=10.1.1.52.9627&rep=rep1&type=pdf}
\endbibitem

\bibitem[{Maclaurin and Adams(2014)}]{maclaurin2014firefly}
Maclaurin, D. and Adams, R.~P. (2014).
\newblock \enquote{{Firefly Monte Carlo: Exact MCMC with Subsets of Data.}}
\newblock In {\em UAI\/}, 543--552.
\endbibitem

\bibitem[{Mitchell and Beauchamp(1988)}]{mitchell1988bayesian}
Mitchell, T.~J. and Beauchamp, J.~J. (1988).
\newblock \enquote{Bayesian variable selection in linear regression.}
\newblock {\em Journal of the American Statistical Association\/}, 83(404):
  1023--1032.
\endbibitem

\bibitem[{Molchanov et~al.(2017)Molchanov, Ashukha, and
  Vetrov}]{molchanov2017variational}
Molchanov, D., Ashukha, A., and Vetrov, D. (2017).
\newblock \enquote{Variational dropout sparsifies deep neural networks.}
\newblock In {\em Proceedings of the 34th International Conference on Machine
  Learning-Volume 70\/}, 2498--2507. JMLR. org.
\endbibitem

\bibitem[{Neal(1992)}]{neal1992bayesian}
Neal, R.~M. (1992).
\newblock \enquote{{Bayesian training of backpropagation networks by the hybrid
  Monte Carlo method}.}
\newblock Technical report, Citeseer.
\endbibitem

\bibitem[{Neklyudov et~al.(2018)Neklyudov, Molchanov, Ashukha, and
  Vetrov}]{neklyudov2018variance}
Neklyudov, K., Molchanov, D., Ashukha, A., and Vetrov, D. (2018).
\newblock \enquote{Variance Networks: When Expectation Does Not Meet Your
  Expectations.}
\newblock In {\em International Conference on Learning Representations\/}.
\endbibitem

\bibitem[{Neklyudov et~al.(2017)Neklyudov, Molchanov, Ashukha, and
  Vetrov}]{neklyudov2017structured}
Neklyudov, K., Molchanov, D., Ashukha, A., and Vetrov, D.~P. (2017).
\newblock \enquote{Structured bayesian pruning via log-normal multiplicative
  noise.}
\newblock In {\em Advances in Neural Information Processing Systems\/},
  6775--6784.
\endbibitem

\bibitem[{Nitarshan(2018)}]{blogwu}
Nitarshan, R. (2018).
\newblock \enquote{{Weight Uncertainty in Neural Networks}.}
\newblock \url{https://www.nitarshan.com/bayes-by-backprop/}.
\endbibitem

\bibitem[{Nowozin(2018)}]{nowozin2018debiasing}
Nowozin, S. (2018).
\newblock \enquote{Debiasing evidence approximations: On importance-weighted
  autoencoders and jackknife variational inference.}
\newblock In {\em International Conference on Learning Representations\/}.
\endbibitem

\bibitem[{Palvanov and Im~Cho(2018)}]{palvanov2018comparisons}
Palvanov, A. and Im~Cho, Y. (2018).
\newblock \enquote{{Comparisons of deep learning algorithms for MNIST in
  real-time environment}.}
\newblock {\em International Journal of Fuzzy Logic and Intelligent Systems\/},
  18(2): 126--134.
\endbibitem

\bibitem[{Polson and Rockova(2018)}]{polson2018posterior}
Polson, N. and Rockova, V. (2018).
\newblock \enquote{Posterior Concentration for Sparse Deep Learning.}
\newblock {\em arXiv preprint arXiv:1803.09138\/}.
\endbibitem

\bibitem[{Posch et~al.(2019)Posch, Steinbrener, and
  Pilz}]{posch2019variational}
Posch, K., Steinbrener, J., and Pilz, J. (2019).
\newblock \enquote{Variational Inference to Measure Model Uncertainty in Deep
  Neural Networks.}
\newblock {\em arXiv preprint arXiv:1902.10189\/}.
\endbibitem

\bibitem[{Quiroz et~al.(2014)Quiroz, Kohn, Villani, and
  Tran}]{quiroz2014speeding}
Quiroz, M., Kohn, R., Villani, M., and Tran, M.-N. (2014).
\newblock \enquote{{Speeding up MCMC by efficient data subsampling}.}
\newblock {\em arXiv preprint arXiv:1404.4178\/}.
\endbibitem

\bibitem[{Quiroz et~al.(2016)Quiroz, Villani, and Kohn}]{quiroz2016exact}
Quiroz, M., Villani, M., and Kohn, R. (2016).
\newblock \enquote{{Exact subsampling MCMC}.}
\newblock {\em arXiv preprint arXiv:1603.08232\/}.
\endbibitem

\bibitem[{Ranganath et~al.(2016)Ranganath, Tran, and
  Blei}]{ranganath2016hierarchical}
Ranganath, R., Tran, D., and Blei, D. (2016).
\newblock \enquote{Hierarchical variational models.}
\newblock In {\em International Conference on Machine Learning\/}, 324--333.
\endbibitem

\bibitem[{Razi and Athappilly(2005)}]{razi2005comparative}
Razi, M.~A. and Athappilly, K. (2005).
\newblock \enquote{{A comparative predictive analysis of neural networks (NNs),
  nonlinear regression and classification and regression tree (CART) models}.}
\newblock {\em Expert Systems with Applications\/}, 29(1): 65--74.
\endbibitem

\bibitem[{Refenes et~al.(1994)Refenes, Zapranis, and
  Francis}]{refenes1994stock}
Refenes, A.~N., Zapranis, A., and Francis, G. (1994).
\newblock \enquote{Stock performance modeling using neural networks: a
  comparative study with regression models.}
\newblock {\em Neural networks\/}, 7(2): 375--388.
\endbibitem

\bibitem[{Ripley(2007)}]{ripley2007pattern}
Ripley, B.~D. (2007).
\newblock {\em Pattern recognition and neural networks\/}.
\newblock Cambridge university press.
\endbibitem

\bibitem[{Salimans et~al.(2015)Salimans, Kingma, and
  Welling}]{salimans2015markov}
Salimans, T., Kingma, D., and Welling, M. (2015).
\newblock \enquote{{Markov chain Monte Carlo and variational inference:
  Bridging the gap}.}
\newblock In {\em International Conference on Machine Learning\/}, 1218--1226.
\endbibitem

\bibitem[{Sargent(2001)}]{sargent2001comparison}
Sargent, D.~J. (2001).
\newblock \enquote{Comparison of artificial neural networks with other
  statistical approaches.}
\newblock {\em Cancer\/}, 91(S8): 1636--1642.
\endbibitem

\bibitem[{Smith and LeSage(2004)}]{smith2004bayesian}
Smith, T.~E. and LeSage, J.~P. (2004).
\newblock \enquote{{A Bayesian probit model with spatial dependencies}.}
\newblock In {\em Spatial and spatiotemporal econometrics\/}, 127--160. Emerald
  Group Publishing Limited.
\endbibitem

\bibitem[{Srivastava et~al.(2014)Srivastava, Hinton, Krizhevsky, Sutskever, and
  Salakhutdinov}]{srivastava2014dropout}
Srivastava, N., Hinton, G., Krizhevsky, A., Sutskever, I., and Salakhutdinov,
  R. (2014).
\newblock \enquote{Dropout: a simple way to prevent neural networks from
  overfitting.}
\newblock {\em The Journal of Machine Learning Research\/}, 15(1): 1929--1958.
\endbibitem

\bibitem[{Steel(2020)}]{steel2020model}
Steel, M.~F. (2020).
\newblock \enquote{Model averaging and its use in economics.}
\newblock {\em Journal of Economic Literature\/}, 58(3): 644--719.
\endbibitem

\bibitem[{Sun et~al.(2022)Sun, Song, and Liang}]{sun2022learning}
Sun, Y., Song, Q., and Liang, F. (2022).
\newblock \enquote{Learning sparse deep neural networks with a spike-and-slab
  prior.}
\newblock {\em Statistics \& Probability Letters\/}, 180: 109246.
\endbibitem

\bibitem[{Wasserman(2000)}]{wasserman2000bayesian}
Wasserman, L. (2000).
\newblock \enquote{Bayesian model selection and model averaging.}
\newblock {\em Journal of mathematical psychology\/}, 44(1): 92--107.
\endbibitem

\bibitem[{Welling and Teh(2011)}]{welling2001belief}
Welling, M. and Teh, Y.~W. (2011).
\newblock \enquote{{Bayesian learning via stochastic gradient Langevin
  dynamics}.}
\newblock In {\em Proceedings of the 28th International Conference on Machine
  Learning (ICML-11)\/}, 681--688.
\endbibitem

\bibitem[{Xiao et~al.(2017)Xiao, Rasul, and Vollgraf}]{xiao2017fashion}
Xiao, H., Rasul, K., and Vollgraf, R. (2017).
\newblock \enquote{Fashion-mnist: a novel image dataset for benchmarking
  machine learning algorithms.}
\newblock {\em arXiv preprint arXiv:1708.07747\/}.
\endbibitem

\end{thebibliography}
}
\clearpage

\appendix

\section{Selected tuning parameters and results for LBBNN-GP-LFMVN}

\begin{table}[!htbp]
 \small
  \centering
  \begin{tabular}{l@{\hspace{0.4cm}}c@{\hspace{0.4cm}}ccccccc}
   \toprule
            &$A_\beta$, $A_\rho$&$A_\xi$&$A_{\omega}$&$A_\Sigma$&$A_{a_\psi}$,$A_{b_\psi}$&$A_{a_\beta}$,$A_{b_\beta}$\\
            \hline
Pre-training&0.00010&0.01000&-&0.01000&0.00100&0.00001\\
Training    &0.00010&0.00010&-&0.00010&0.00000&0.00000
\\
Post-training&0.00010&0.00000&-&0.00000&0.00000&0.00000\\
\bottomrule
  \end{tabular}
 \caption{\small Specifications of diagonal elements of $\boldsymbol A$ matrix for the step sizes of optimization routines for LBBNN-GP-LFMVN.}
 \label{tunAPP}
 
\end{table}

\begin{table}[!htbp]
 \caption{\small Performance metrics  for the MNIST data, addition to Table~3 using low factor MVN within the variational inference part.
No post-training is applied when getting these results. For further detail see Table~2 and the caption of Table~3 in the main text.}
  \label{S-t1}
   \footnotesize
  \centering
\begin{tabular}{llll|ccccc}
   \hline
\multicolumn{2}{c}{Prediction}&Model-Prior-&&\multicolumn{1}{c}{All cl}&\multicolumn{2}{c}{0.95 threshold}&Dens.&Epo.\\
$\bm\gamma$&$\bm\beta$&Method&$R$& Acc& Acc&Num.cl&level&time\\
\hline
\hline
SIM&SIM&LBBNN-GP-LFMVN&1&0.959 (0.956,0.960)&-&-&0.450&13.009\\
SIM&SIM&LBBNN-GP-LFMVN&10&0.979 (0.978,0.980)&1.000 &7760&1.000&13.009\\
ALL&EXP&LBBNN-GP-LFMVN&1&0.976 (0.976,0.978)&-&-&1.000&13.009\\ 
MED&SIM&LBBNN-GP-LFMVN&1&0.959 (0.956,0.960)&-&-&0.449&13.009\\ 
MED&SIM&LBBNN-GP-LFMVN&10&0.979 (0.978,0.980)&1.000 &7764&0.449&13.009\\ 
MED&EXP&LBBNN-GP-LFMVN&1&0.975 (0.974,0.977)&-&-&0.449&13.009\\
\bottomrule
  \end{tabular}
\end{table}

\begin{table}[!htbp]
 \caption{\small Performance metrics  for the FMNIST data addition to Table~\ref{t11} using low factor MVN within the variational inference part.
No post-training is applied when getting these results. For further detail see Table~2 and the caption of Table~3 in the main text.}
  \label{S-t2}
   \footnotesize
  \centering
\begin{tabular}{llll|ccccc}
   \hline
\multicolumn{2}{c}{Prediction}&Model-Prior-&&\multicolumn{1}{c}{All cl}&\multicolumn{2}{c}{0.95 threshold}&Dens.&Epo.\\
$\bm\gamma$&$\bm\beta$&Method&$R$& Acc& Acc&Num.cl&level&time\\
\hline
SIM&SIM&LBBNN-GP-LFMVN&1&0.849 (0.845,0.853)&-&-&0.456&12.949\\
SIM&SIM&LBBNN-GP-LFMVN&10&0.876 (0.874,0.880)&0.996 &4485&1.000&12.949\\
ALL&EXP&LBBNN-GP-LFMVN&1&0.864 (0.862,0.867)&-&-&1.000&12.949\\ 
MED&SIM&LBBNN-GP-LFMVN&1&0.849 (0.844,0.852)&-&-&0.455&12.949\\ 
MED&SIM&LBBNN-GP-LFMVN&10&0.877 (0.875,0.878)&0.996 &4486&0.455&12.949\\ 
MED&EXP&LBBNN-GP-LFMVN&1&0.862 (0.858,0.864)&-&-&0.455&12.949\\
\bottomrule
  \end{tabular}
\end{table}

\begin{table}[!t]
 \caption{\small Performance metrics  for the PHONEME data addition to Table~\ref{t111} using  low factor MVN within the variational inference part.
No post-training is applied when getting these results. For further detail see Table~2 and the caption of Table~3 in the main text.}
  \label{S-t3}
  \footnotesize
  \centering
\begin{tabular}{llll|ccccc}
   \hline
\multicolumn{2}{c}{Prediction}&Model-Prior-&&\multicolumn{1}{c}{All cl}&\multicolumn{2}{c}{0.95 threshold}&Dens.&Epo.\\
$\bm\gamma$&$\bm\beta$&Method&$R$& Acc& Acc&Num.cl&level&time\\
\hline
SIM&SIM&LBBNN-GP-LFMVN&1&0.918 (0.906,0.928)&-&-&0.497&0.472\\
SIM&SIM&LBBNN-GP-LFMVN&10&0.929 (0.926,0.934)
&0.994 &663&1.000&0.472\\
ALL&EXP&LBBNN-GP-LFMVN&1&0.921 (0.915,0.931)&-&-&1.000&0.472\\ 
MED&SIM&LBBNN-GP-LFMVN&1&0.918 (0.909,0.930)&-&-&0.497&0.472\\ 
MED&SIM&LBBNN-GP-LFMVN&10&0.929 (0.925,0.933)&0.995 &664&0.497&0.472\\ 
MED&EXP&LBBNN-GP-LFMVN&1& 0.917 (0.912,0.925)&-&-&0.497&0.472\\
\bottomrule
  \end{tabular}
\end{table}

\clearpage
\newpage

\section{Results for frequentist neural network with various degrees of pruning}

In this experiments, {we included the results of standard magnitude-based pruning of a frequentist neural network of the same configuration as we used for the BNNs in the main paper for MNIST, FMNIST, and PHONEMNE data sets}. There, in Tables S-5, S-6, and S-7 we show that under all sparsity levels (corresponding to those obtained with the Bayesian approaches) for all three datasets addressed, all Bayesian approaches outperform the frequentist counterpart in terms of predictive accuracy. Yet, the fully connected dense FNN performs on par with the Bayesian versions. Here, sparsity was obtained by removing a corresponding share of weights/neurons with the lowest magnitude. 

\begin{table}[H]
 \caption{\small Performance metrics of frequentist neural network under various degrees of pruning for the MNIST data.}
  \label{S-t4}
  \footnotesize
  \centering
\begin{tabular}{l|l|l|c}
   \hline
Dens. level&{Acc. neuron pruning}&{Acc. weight pruning}&Epo.time\\
\hline
1.000&98.110 (98.040,98.350)&98.110 (98.040,98.350)&1.360\\
0.500&98.030 (97.720,98.080)&87.170 (78.260,90.730)&1.360\\
0.226&96.600 (95.570,97.150)&36.915 (32.600,46.910)&1.360\\
0.194&96.130 (94.680,96.670)&32.320 (27.320,38.150)&1.360\\
0.180&95.710 (93.290,96.730)&31.685 (25.010,35.840)&1.360\\
0.163&95.070 (91.040,96.170)&29.760 (23.810,33.340)&1.360\\
0.090&80.050 (75.060,90.120)&18.235 (14.500,21.520)&1.360\\
0.079&77.730 (68.910,86.090)&19.055 (15.270,25.990)&1.360\\
\bottomrule
  \end{tabular}
\end{table}

\begin{table}[H]
 \caption{\small Performance metrics of frequentist neural network under various degrees of pruning for the FMNIST data.}
  \label{S-t5}
  \footnotesize
  \centering
\begin{tabular}{l|l|l|c}
   \hline
Dens. level&{Acc. neuron pruning}&{Acc. weight pruning}&Epo.time\\
\hline
1.000&89.470 (86.240,89.790)&89.470 (86.240,89.790)&1.260\\
0.500&85.150 (80.040,87.180)&29.865 (21.370,41.080)&1.260\\
0.302&67.610 (53.110,75.760)&17.175 (14.020,20.660)&1.260\\
0.156&41.595 (37.520,45.510)&11.460 (9.370,20.720)&1.260\\
0.129&39.040 (30.090,44.980)&10.515 (9.300,18.490)&1.260\\
0.120&39.775 (26.140,44.480)&09.980 (8.130,18.610)&1.260\\
0.108&38.750 (23.890,43.660)&10.325 (8.080,20.110)&1.260\\
0.094&36.340 (23.640,41.050)&09.790 (8.040,25.000)&1.260\\
\bottomrule
  \end{tabular}
\end{table}

\begin{table}[H]
 \caption{\small Performance metrics of frequentist neural network under various degrees of pruning for the PHONEMNE data.}
  \label{S-t6}
  \footnotesize
  \centering
\begin{tabular}{l|l|l|c}
   \hline
Dens. level&{Acc. neuron pruning}&{Acc. weight pruning}&Epo.time\\
\hline
1.000&92.400 (92.1,92.9)&92.500 (92.200,92.700)&0.029\\
0.600&92.250 (91.8,93.2)&88.050 (86.100,90.500)&0.029\\
0.509&92.100 (91.5,92.7)&82.650 (78.100,85.400)&0.029\\
0.457&92.100 (91.6,92.7)&80.950 (75.700,86.500)&0.029\\
0.371&91.850 (90.4,92.6)&76.050 (68.300,82.200)&0.029\\
0.307&91.700 (90.7,92.3)&71.950 (66.500,80.200)&0.029\\
0.255&91.000 (90.4,92.8)&67.900 (58.600,80.200)&0.029\\
0.225&90.700 (90.2,92.6)&64.950 (51.800,77.300)&0.029\\
\bottomrule
  \end{tabular}
\end{table}

\clearpage

\section{Results based on post-training}

\begin{table}[!htbp]
 \caption{\small Performance metrics  for the MNIST data for suggested in the article Bayesian approaches to BNN. The results after post-training are reported here. For further detail see Table~2 and the caption of Table~3. }
  \label{t1p}
  \small
  \centering
\begin{tabular}{llll|cccc}
   \hline
   &&&&\multicolumn{4}{c}{MNIST}\\
\multicolumn{4}{c|}{}&\multicolumn{1}{c}{All cl}&\multicolumn{2}{c}{0.95 threshold}&Density\\
$\bm\gamma$&$\bm\beta$&Method&$R$& Acc& Acc&Num.cl&level\\
    \hline
SIM&SIM&LBBNN-GP-MF&1&0.967 (0.966,0.969)&-&-&0.090\\
SIM&SIM&LBBNN-GP-MF&10&0.980 (0.979,0.982)&0.999 &8346&1.000\\
ALL&EXP&LBBNN-GP-MF&1&0.982 (0.980,0.983)&-&-&1.000\\ 
MED&SIM&LBBNN-GP-MF&1&0.969 (0.966,0.972)&-&-&0.079\\
MED&SIM&LBBNN-GP-MF&10&0.980 (0.979,0.982)&0.999 &8472&0.079\\
MED&EXP&LBBNN-GP-MF&1&0.981 (0.980,0.984)&-&-&0.079\\
\hline
SIM&SIM&LBBNN-GP-MVN&1&0.967 (0.965,0.970)&-&-&0.180\\
SIM&SIM&LBBNN-GP-MVN&10&0.979 (0.977,0.981)&1.000 &7994&1.000\\
ALL&EXP&LBBNN-GP-MVN&1&0.980 (0.979,0.981)&-&-&1.000\\ 
MED&SIM&LBBNN-GP-MVN&1&0.973 (0.971,0.977)&-&-&0.163\\
MED&SIM&LBBNN-GP-MVN&10&0.978 (0.977,0.979)&1.000 &8107&0.163\\
MED&EXP&LBBNN-GP-MVN&1&0.976 (0.974,0.977)&-&-&0.163\\
\hline
SIM&SIM&LBBNN-GP-LFMVN&1&0.971 (0.969,0.973)&-&-&0.450\\
SIM&SIM&LBBNN-GP-LFMVN&10&0.978 (0.976,0.980)&0.999 &8366&1.000\\
ALL&EXP&LBBNN-GP-LFMVN&1&0.979 (0.978,0.980)&-&-&1.000\\ 
MED&SIM&LBBNN-GP-LFMVN&1&0.973 (0.971,0.977)&-&-&0.449\\
MED&SIM&LBBNN-GP-LFMVN&10&0.978 (0.977,0.979)&0.999 &8645&0.449\\
MED&EXP&LBBNN-GP-LFMVN&1&0.978 (0.976,0.979)&-&-&0.449\\
\hline
SIM&SIM&BNN-GP-CMF&1&0.982 (0.894,0.984)&-&-&0.226\\
SIM&SIM&BNN-GP-CMF&10&0.984 (0.896,0.986)&0.995 &9586&1.000\\
ALL&EXP&BNN-GP-CMF&1&0.983 (0.894,0.984)&-&-&1.000\\
\hline
PRN&SIM&BNN-HP-MF&1&0.967 (0.965,0.968)&-&-&0.194\\
PRN&SIM&BNN-HP-MF&10&0.982 (0.981,0.983)&1.000 &0007&0.194\\
PRN&EXP&BNN-HP-MF&1&0.966 (0.964,0.969)&-&-&0.194\\
\bottomrule
  \end{tabular}
\end{table}

\begin{table}[!htbp]
 \caption{\small Performance metrics  for the FMNIST data for the suggested in the article Bayesian approaches to BNN. The results after post-training are reported here. For further detail  see Table~2 and the caption of Table~3.}
  \label{t11p}
  \small
  \centering
\begin{tabular}{llll|cccc}
   \hline
   &&&&\multicolumn{4}{c}{FMNIST}\\
\multicolumn{4}{c|}{}&\multicolumn{1}{c}{All cl}&\multicolumn{2}{c}{0.95 threshold}&Density\\
$\bm\gamma$&$\bm\beta$&Method&$R$& Acc& Acc&Num.cl&level\\
    \hline
SIM&SIM&LBBNN-GP-MF&1&0.863 (0.862,0.869)&-&-&0.120\\
SIM&SIM&LBBNN-GP-MF&10&0.884 (0.881,0.886)&0.995 &4932&1.000\\
ALL&EXP&LBBNN-GP-MF&1&0.882 (0.879,0.887)&-&-&1.000\\
MED&SIM&LBBNN-GP-MF&1&0.866 (0.865,0.869)&-&-&0.108\\
MED&SIM&LBBNN-GP-MF&10&0.885 (0.882,0.887)&0.995 &4994&0.108\\
MED&EXP&LBBNN-GP-MF&1&0.882 (0.878,0.886)&-&-&0.108\\
\hline
SIM&SIM&LBBNN-GP-MVN&1&0.859 (0.857,0.862)&-&-&0.156\\
SIM&SIM&LBBNN-GP-MVN&10&0.880 (0.874,0.881)&0.996 &4615&1.000\\
ALL&EXP&LBBNN-GP-MVN&1&0.876 (0.872,0.877)&-&-&1.000\\ 
MED&SIM&LBBNN-GP-MVN&1&0.863 (0.860,0.865)&-&-&0.129\\
MED&SIM&LBBNN-GP-MVN&10&0.878 (0.874,0.880)&0.995 &4801&0.129\\
MED&EXP&LBBNN-GP-MVN&1&0.873 (0.870,0.875)&-&-&0.129\\
\hline
SIM&SIM&LBBNN-GP-LFMVN&1&0.847 (0.844,0.850)&-&-&0.456\\
SIM&SIM&LBBNN-GP-LFMVN&10&0.875 (0.873,0.878)&0.996 &4431&1.000\\
ALL&EXP&LBBNN-GP-LFMVN&1&0.866 (0.859,0.868)&-&-&1.000\\ 
MED&SIM&LBBNN-GP-LFMVN&1&0.846 (0.844,0.849)&-&-&0.455\\
MED&SIM&LBBNN-GP-LFMVN&10&0.876 (0.873,0.879)&0.996 &4426&0.455\\
MED&EXP&LBBNN-GP-LFMVN&1&0.864 (0.860,0.865)&-&-&0.455\\
    \hline
SIM&SIM&BNN-GP-CMF&1&0.897 (0.820,0.899)&-&-&0.094\\
SIM&SIM&BNN-GP-CMF&10&0.897 (0.823,0.902)&0.943 &8826&1.000\\
ALL&EXP&BNN-GP-CMF&1&0.896 (0.820,0.901)&-&-&1.000\\
\hline
PRN&SIM&BNN-HP-MF&1&0.867 (0.864,0.871)&-&-&0.302\\
PRN&SIM&BNN-HP-MF&10&0.888 (0.887,0.890)&1.000 &0147&0.302\\
PRN&EXP&BNN-HP-MF&1&0.868 (0.864,0.869)&-&-&0.302\\
\bottomrule
  \end{tabular}
\end{table}

\begin{table}[!htbp]
 \caption{\small Performance metrics  for the PHONEME data for the suggested in the article Bayesian approaches to BNN. The results after post-training are reported here. For further detail see Table~2 and the caption of Table~3.}
  \label{t111p}
  \small
  \centering
\begin{tabular}{llll|cccc}
   \hline
   &&&&\multicolumn{4}{c}{FMNIST}\\
\multicolumn{4}{c|}{}&\multicolumn{1}{c}{All cl}&\multicolumn{2}{c}{0.95 threshold}&Density\\
$\bm\gamma$&$\bm\beta$&Method&$R$& Acc& Acc&Num.cl&level\\
    \hline
SIM&SIM&LBBNN-GP-MF&1&0.917 (0.895,0.928)&-&-&0.120\\
SIM&SIM&LBBNN-GP-MF&10&0.927 (0.921,0.931)&0.991 &703&1.000\\
ALL&EXP&LBBNN-GP-MF&1&0.924 (0.923,0.929)
&-&-&1.000\\
MED&SIM&LBBNN-GP-MF&1&0.924 (0.910,0.930)&-&-&0.108\\
MED&SIM&LBBNN-GP-MF&10&0.924 (0.911,0.932)&0.981 &772&0.108\\
MED&EXP&LBBNN-GP-MF&1&0.925 (0.909,0.931)&-&-&0.108\\
\hline
SIM&SIM&LBBNN-GP-MVN&1&0.922 (0.914,0.928)&-&-&0.255\\
SIM&SIM&LBBNN-GP-MVN&10&0.931 (0.926,0.935)&0.993 &682&1.000\\
ALL&EXP&LBBNN-GP-MVN&1&0.927 (0.917,0.932)&-&-&1.000\\ 
MED&SIM&LBBNN-GP-MVN&1&0.923 (0.919,0.931)&-&-&0.225\\ 
MED&SIM&LBBNN-GP-MVN&10&0.928 (0.924,0.933)&0.991 &691&0.225\\ 
MED&EXP&LBBNN-GP-MVN&1&0.924 (0.915,0.930)&-&-&0.225\\
\hline
SIM&SIM&LBBNN-GP-LFMVN&1&0.919 (0.910,0.926)&-&-&0.497\\
SIM&SIM&LBBNN-GP-LFMVN&10&0.929 (0.925,0.931)
&0.994 &678&1.000\\
ALL&EXP&LBBNN-GP-LFMVN&1&0.921 (0.912,0.933)&-&-&1.000\\ 
MED&SIM&LBBNN-GP-LFMVN&1&0.917 (0.895,0.926)&-&-&0.497\\ 
MED&SIM&LBBNN-GP-LFMVN&10&0.928 (0.923,0.932)&0.994 &676&0.497\\ 
MED&EXP&LBBNN-GP-LFMVN&1&0.917 (0.909,0.926)&-&-&0.497\\
    \hline
SIM&SIM&BNN-GP-CMF&1&0.882 (0.716,0.904)&-&-&0.509\\
SIM&SIM&BNN-GP-CMF&10&0.921 (0.918,0.930)&0.960 &189&1.000\\
ALL&EXP&BNN-GP-CMF&1&0.877 (0.697,0.909)
&-&-&1.000\\
\hline
PRN&SIM&BNN-HP-MF&1&0.913 (0.909,0.926)&-&-&0.457\\
PRN&SIM&BNN-HP-MF&10&0.917 (0.916,0.922)&0.936 &037&0.457\\
PRN&EXP&BNN-HP-MF&1&0.917 (0.914,0.924)&-&-&0.457\\
\bottomrule
  \end{tabular}
\end{table}

\begin{figure}[!htbp]

\includegraphics[trim={0.2cm 0.2cm 0.3cm 0.3cm},clip, width=0.58\linewidth]{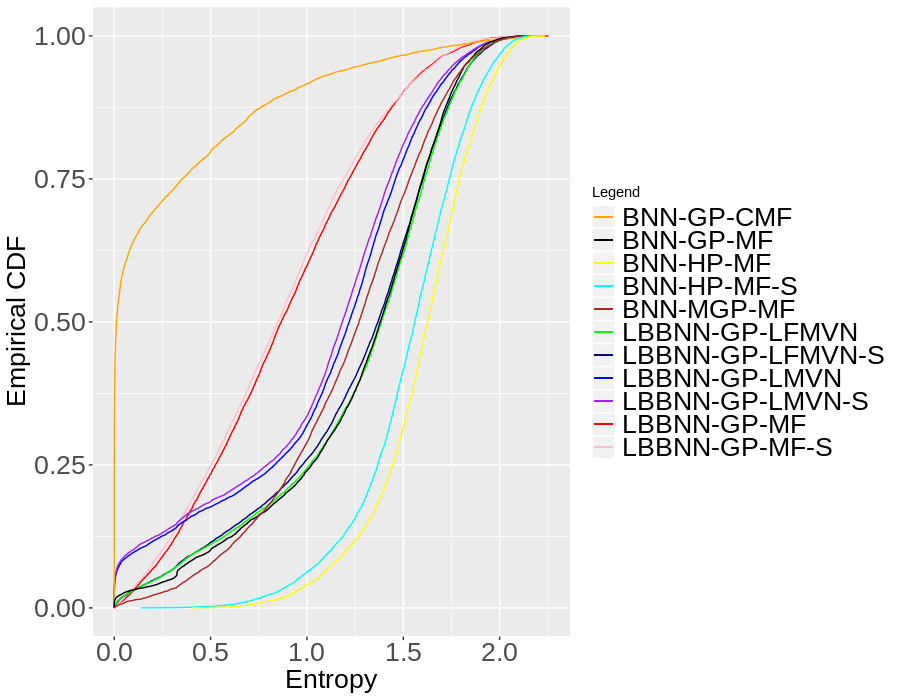}
\includegraphics[trim={0.2cm 0.2cm 8.3cm 0.3cm},clip,width=0.38\linewidth]{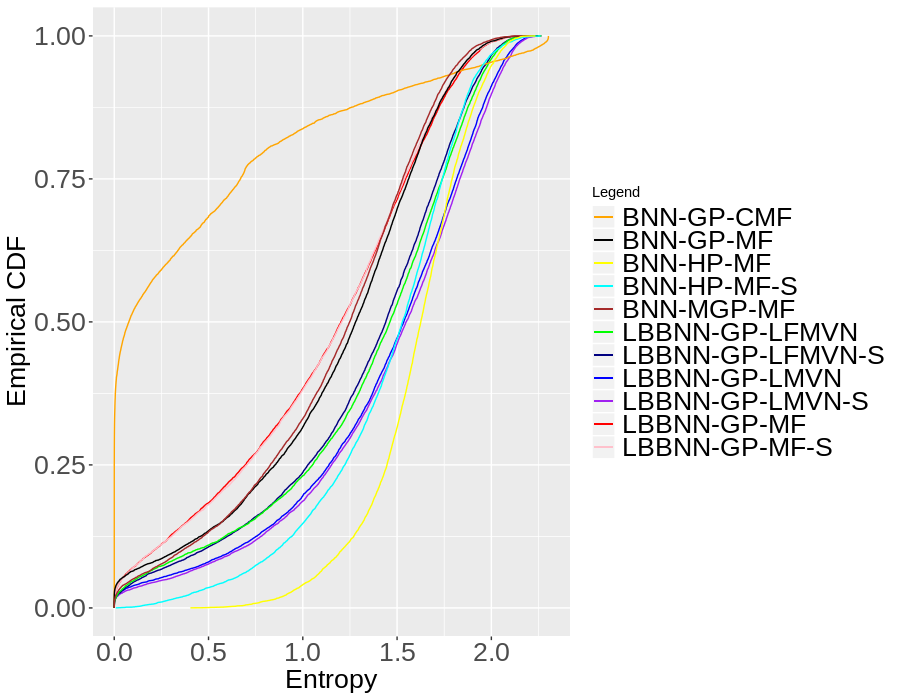}
\caption{Empirical CDF for the entropy of the marginal posterior predictive distributions trained on MNIST and applied to FMNIST (left) and vice versa (right) for simulation $s=10$ after post-training. Postfix S indicates the model sparsified by an appropriate method. }\label{Fig:entroppt}
\end{figure}

\clearpage
\newpage

\section{Results on LBBNN with fixed parameters of the prior}

In this section, we add results for MNIST, and FMNIST datasets obtained with the basic LBBNN with Gaussian priors on the weights and Bernoulli priors on the inclusion of the weights, where no further hyperpriors are assumed. Prior inclusion probabilities here correspond to the BIC penalties, i.e. $\psi^{(l)} = \exp(-2\log n)$ and prior variances of the slav components are set to 1. Also, basic mean field variational distributions are used here.

\begin{table}[H]
 \caption{\small Performance metrics  for the MNIST  data for suggested in the article Bayesian approaches to BNN. The results without post-training are reported here. For further detail see Table~2 and the caption of Table~3. }
  \label{t1o}
  \small
  \centering
\begin{tabular}{llll|cccc}
   \hline
   &&&&\multicolumn{4}{c}{MNIST}\\
\multicolumn{4}{c|}{}&\multicolumn{1}{c}{All cl}&\multicolumn{2}{c}{0.95 threshold}&Density\\
$\bm\gamma$&$\bm\beta$&Method&$R$& Acc& Acc&Num.cl&level\\
    \hline
SIM&SIM&LBBNN-GP-MF&1&0.958 (0.954,0.960)&-&-&0.056\\
SIM&SIM&LBBNN-GP-MF&10&0.967 (0.966,0.971)&0.999 &7064&0.084\\
ALL&EXP&LBBNN-GP-MF&1&0.969 (0.967,0.970)&-&-&1.000\\
MED&SIM&LBBNN-GP-MF&1&0.961 (0.957,0.964)&-&-&0.051\\ 
MED&SIM&LBBNN-GP-MF&10&0.964 (0.962,0.967)&0.998 &7441&0.051\\
MED&EXP&LBBNN-GP-MF&1&0.965 (0.963,0.968)&-&-&0.051\\
\bottomrule
  \end{tabular}
\end{table}

\begin{table}[H]
 \caption{\small Performance metrics  for the FMNIST data  for suggested in the article Bayesian approaches to BNN. The results without post-training are reported here. For further detail see Table~2 and the caption of Table~3. }
  \label{t1o}
  \small
  \centering
\begin{tabular}{llll|cccc}
   \hline
   &&&&\multicolumn{4}{c}{FMNIST}\\
\multicolumn{4}{c|}{}&\multicolumn{1}{c}{All cl}&\multicolumn{2}{c}{0.95 threshold}&Density\\
$\bm\gamma$&$\bm\beta$&Method&$R$& Acc& Acc&Num.cl&level\\
    \hline
SIM&SIM&LBBNN-GP-MF&1&0.854 (0.850,0.858)&-&-&0.066\\
SIM&SIM&LBBNN-GP-MF&10&0.867 (0.863,0.870)&0.996 &4097&0.083\\
ALL&EXP&LBBNN-GP-MF&1&0.866 (0.864,0.874)&-&-&1.000\\
MED&SIM&LBBNN-GP-MF&1&0.858 (0.854,0.865)&-&-&0.065\\ 
MED&SIM&LBBNN-GP-MF&10&0.863 (0.859,0.869)&0.993 &4347&0.065\\
MED&EXP&LBBNN-GP-MF&1&0.863 (0.859,0.870)&-&-&0.065\\
\bottomrule
  \end{tabular}
\end{table}

\begin{table}[H]
 \caption{\small Medians and standard deviations of the average (per layer) marginal inclusion probability (see the text for the definition) for our model for both MNIST and FMNIST data across 10 repeated experiments.}
 \label{t3o}
  \small
  \centering
  \begin{tabular}{l@{\hspace{0.4cm}}c@{\hspace{0.4cm}}ccc}
   \toprule
&\multicolumn{2}{l}{\textbf{MNIST data}}&\multicolumn{2}{l}{\textbf{FMNIST data}}\\
\hline
Layer& Med.  & SD.& Med.  & SD.\\\hline
$\rho(\gamma^{(1)}|\D)$&0.0520&0.0005&0.0665&0.0004\\
$\rho(\gamma^{(2)}|\D)$&0.0598&0.0003&0.0613&0.0005\\
$\rho(\gamma^{(3)}|\D)$&0.2217&0.0064&0.2013&0.0051\\
\hline
\bottomrule
  \end{tabular}

\end{table}

\begin{figure}[h!]
\centering
\includegraphics[width=0.6\linewidth]{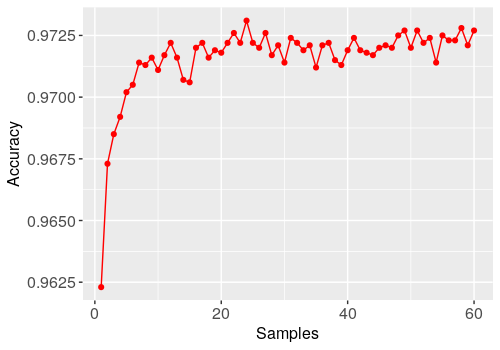}

\caption{\small  Accuracy of predictions versus the number of samples from the joint posterior of models and parameters for simulation $i=10$ on the MNIST data.}\label{Fig:predsamples}
\end{figure}

\begin{figure}[h!]
\centering

\includegraphics[width=0.6\linewidth]{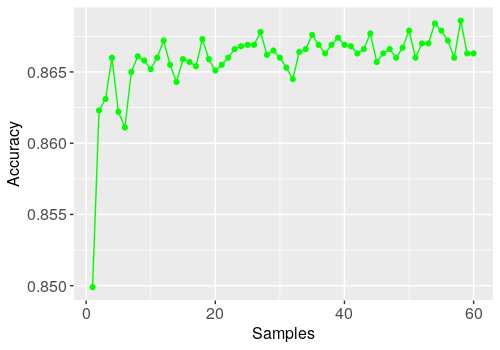}

\caption{\small  Accuracy of predictions versus the number of samples from the joint posterior for the FMNIST data.}\label{Fig:samples}
\end{figure}

\clearpage
\newpage

\section{Results with varying widths of the layers}

\begin{figure}[h!]
\centering

\includegraphics[width=1\linewidth]{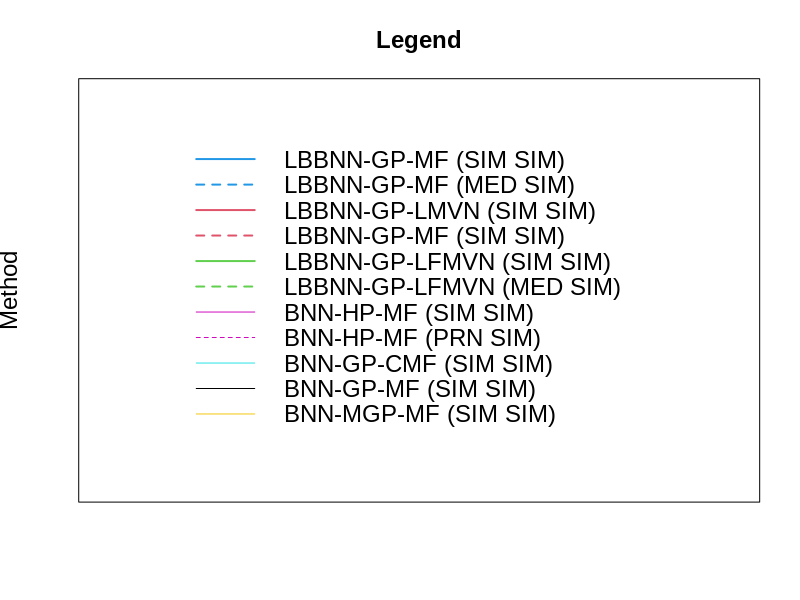}

\caption{\small  Legend for lines used in Figures \ref{Fig:widthsim}-\ref{Fig:widthdens} for experiments width varying width factor for the PHONEMNE data.}\label{Fig:widthsim}
\end{figure}

\begin{figure}[h!]
\centering
\includegraphics[width=1\linewidth]{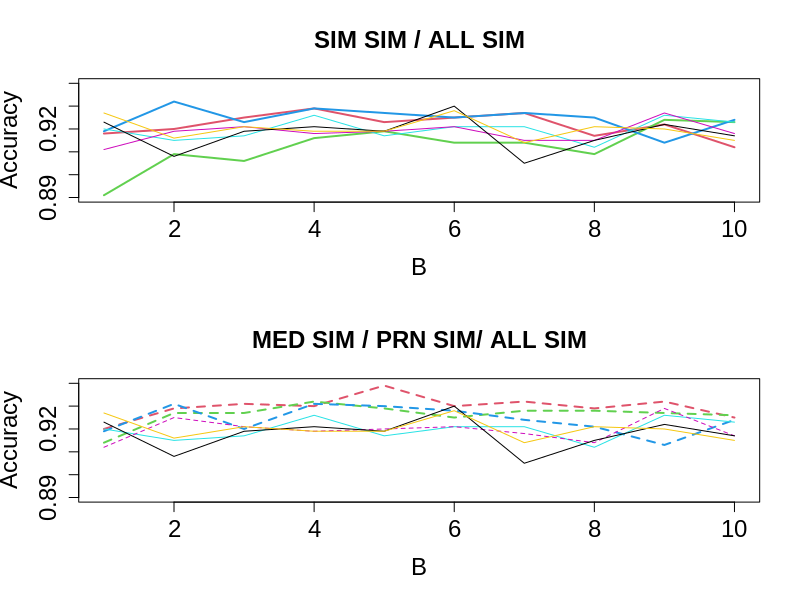}

\caption{\small  Accuracy of predictions versus the width factor for the PHONEMNE data: Top graph (SIM, SIM) for LBBNN-GP-MF, LBBNN-GP-MVN, LBBNN-GP-LFMVN, BNN-GP-CMF, BNN-HP-MF; (ALL, SIM) for BNN-GP-MF, BNN-MGP-MF. Bottom graph (MED, SIM) for LBBNN-GP-MF, LBBNN-GP-MVN, LBBNN-GP-LFMVN; (SIM SIM) for  BNN-GP-CMF; (PRN SIM) for BNN-HP-MF; (ALL, SIM) for BNN-GP-MF, BNN-MGP-MF.}\label{Fig:widthsim}
\end{figure}

\begin{figure}[h!]
\centering

\includegraphics[width=1\linewidth]{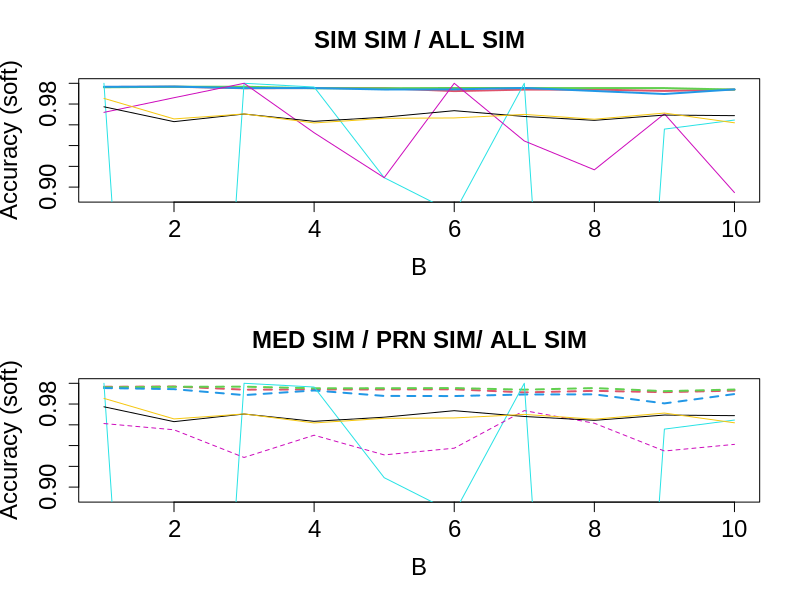}

\caption{\small Accuracy of uncertainty aware predictions (0.95 posterior model averaged probability threshold) versus the width factor for the PHONEMNE data: Top graph (SIM, SIM) for LBBNN-GP-MF, LBBNN-GP-MVN, LBBNN-GP-LFMVN, BNN-GP-CMF, BNN-HP-MF; (ALL, SIM) for BNN-GP-MF, BNN-MGP-MF. Bottom graph (MED, SIM) for LBBNN-GP-MF, LBBNN-GP-MVN, LBBNN-GP-LFMVN; (SIM SIM) for  BNN-GP-CMF; (PRN SIM) for BNN-HP-MF; (ALL, SIM) for BNN-GP-MF, BNN-MGP-MF.}\label{Fig:widthsoft}
\end{figure}

\begin{figure}[h!]
\centering

\includegraphics[width=1\linewidth]{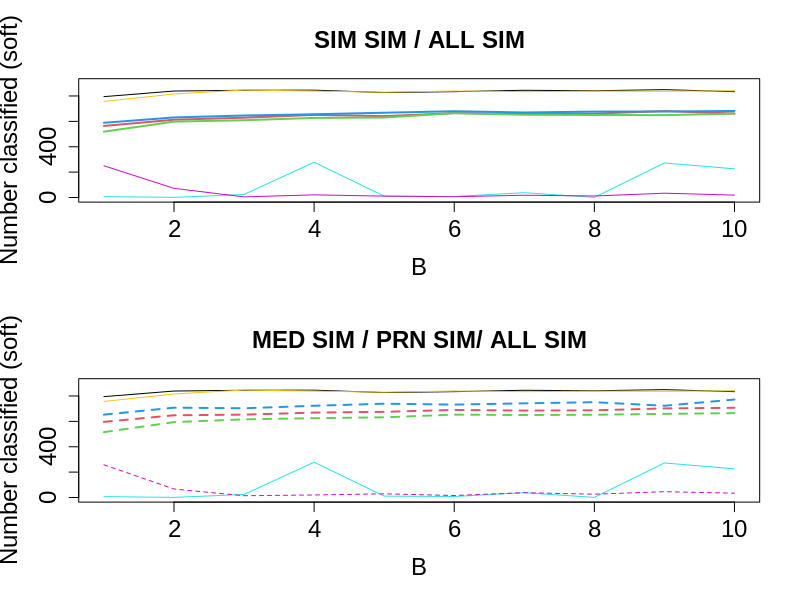}

\caption{\small  Number of uncertainty aware predictions (0.95 posterior model averaged probability threshold) versus the width factor for the PHONEMNE data: further details in the caption to Figure \ref{Fig:widthsoft}.}\label{Fig:widthdens}
\end{figure}

\begin{figure}[h!]
\centering

\includegraphics[width=0.7\linewidth]{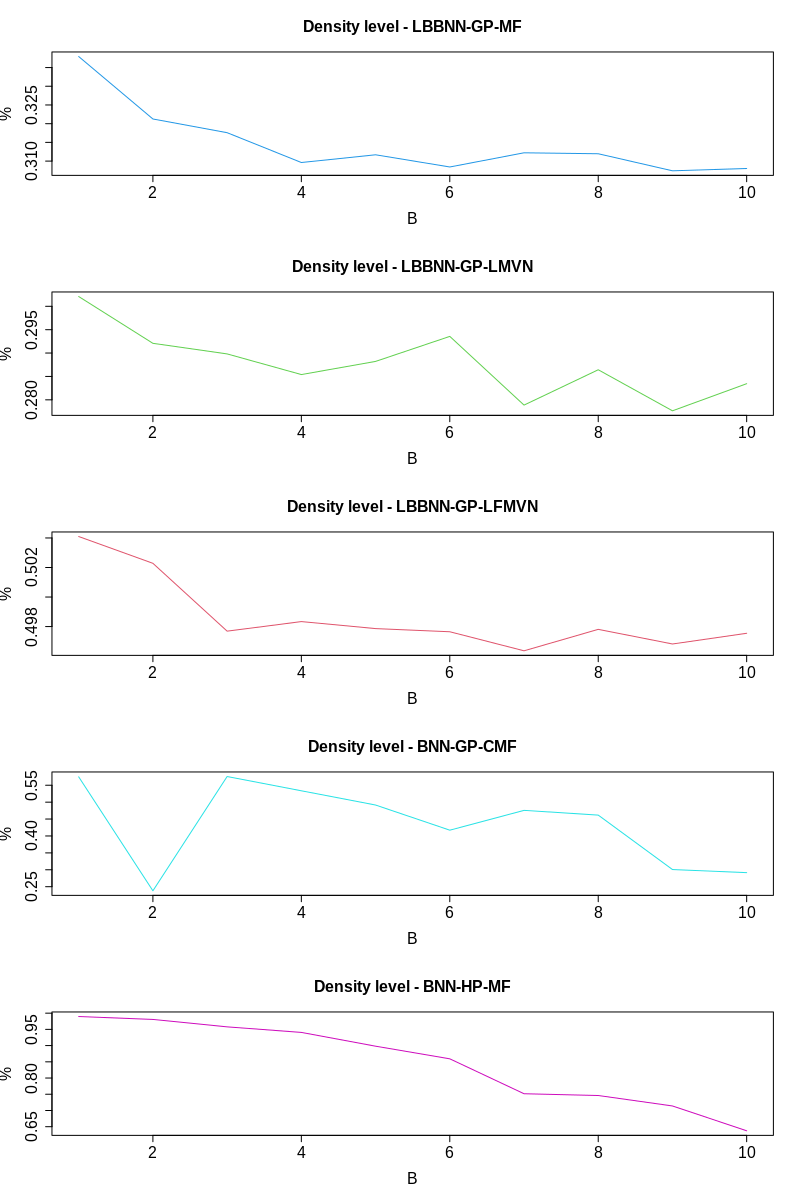}

\caption{\small  Densities for various Bayesian sparsifying methods for different widths of the layers on the PHONEMNE data.}\label{Fig:samples}
\end{figure}

In this experiment, we used the settings used for the PHONEMNE data study from Section 4 of the main paper, but the width of all layers was varied as 40B-60B-60B, and B was changed from 1 to 10. The results are summarized in Figures \ref{Fig:widthsim}-\ref{Fig:widthdens}. As expected all the sparsity-inducing methods results in an increase in sparsity levels as the width increases. Predictions increase in general from B = 1 to B = 4 and stabilize at B = 4. Last but not least, for every fixed B, we have the same general concussions as those reported in the PHONEMME data set in the main paper: LBBNNs are giving a good and stable trade-off between predictive accuracy, uncertainty aware accuracy, and sparsity for all widths' configurations of the models.

\newpage
\clearpage

\section{Results on correlation structures between posterior inclusion of weights for PHONEMNE data}

In this section, we report correlation structures between 1000 samples of $\bm \gamma^{(3)}$ from their approximate posterior (the third layer is chosen due to having the smallest number of parameters, but the overall picture is the same for other layers) for LBBNN-GP-MF, LBBNN-GP-MVN, LBBNN-GP-LFMVN models trained on PHONEMNE data, where the same settings as those in Section 4 of the main paper are used. The results are presented in Figures \ref{Fig:amf}-\ref{Fig:alfmvn} and demonstrate a close-to-diagonal correlation structure for LBBNN-GP-MF, and as expected much more structure for LBBNN-GP-MVN and LBBNN-GP-LFMVN. 

\begin{figure}[h!]
\centering

\includegraphics[width=0.8\linewidth]{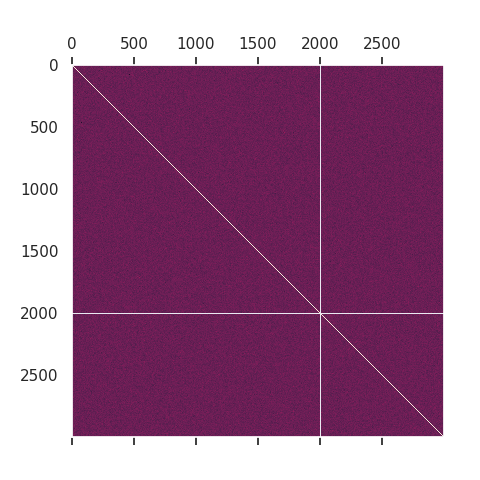}

\caption{\small Correlation structures between posterior inclusions for layer 3 of LBBNN-GP-MF on PHONEMNE data.}\label{Fig:amf}
\end{figure}

\begin{figure}[h!]
\centering

\includegraphics[width=0.8\linewidth]{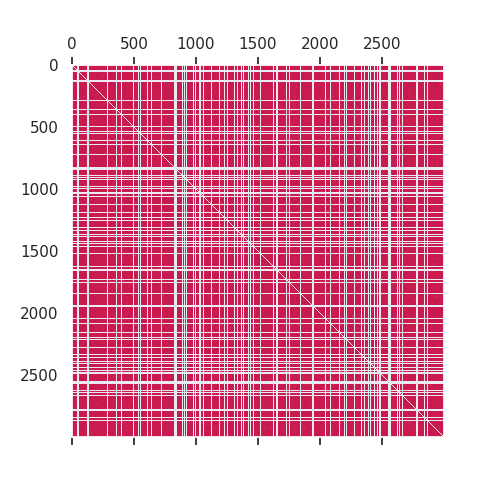}

\caption{\small Correlation structures between posterior inclusions for layer 3 of LBBNN-GP-MVN on PHONEMNE data.}\label{Fig:aMVN}
\end{figure}

\begin{figure}[h!]
\centering

\includegraphics[width=0.8\linewidth]{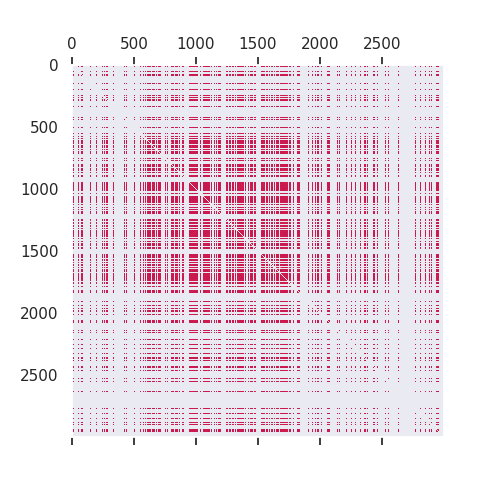}

\caption{\small Correlation structures between posterior inclusions for layer 3 of LBBNN-GP-LFMVN on PHONEMNE data.}\label{Fig:alfmvn}
\end{figure}

\end{document}